\documentclass{article} 
\usepackage{iclr2020_conference,times}

\usepackage{amsmath,amsfonts,bm}

\def\eqref#1{equation~\ref{#1}}

\def\1{\bm{1}}

\DeclareMathAlphabet{\mathsfit}{\encodingdefault}{\sfdefault}{m}{sl}
\SetMathAlphabet{\mathsfit}{bold}{\encodingdefault}{\sfdefault}{bx}{n}

\usepackage{amsmath}

\usepackage[utf8]{inputenc} 
\usepackage[T1]{fontenc}    
\usepackage{hyperref}       
\usepackage{url}            
\usepackage{booktabs}       
\usepackage{amsfonts}       
\usepackage{nicefrac}       
\usepackage{microtype}      
\usepackage{graphicx}
\usepackage{mathabx}
\usepackage{longtable}
\usepackage{amsthm}   
\usepackage{stackrel}

\usepackage{bm} 
\usepackage{subfig}
\usepackage{float}  
\usepackage{multirow}
\usepackage{xcolor,colortbl} 
\usepackage{color}  
\usepackage[linesnumbered,ruled]{algorithm2e}

\providecommand{\realnum}{\mathbb{R}}
\providecommand{\naturalnum}{\mathbb{N}}
\providecommand{\bmcal}[1]{\bm{\mathcal{#1}}}
\providecommand{\algcol}[1]{\textcolor{blue}{#1}}
 \providecommand{\matnot}[1]{_{[{#1}]}}  
\providecommand{\citep}{\cite} 
\providecommand{\citet}{\cite}
\newtheorem{lemma}{Lemma}
\newtheorem{claim}{Claim}
\newcommand{\supplementary}{appendix}

\iclrfinalcopy

\title{PolyGAN: High-Order Polynomial Generators}

\author{
Grigorios Chrysos${}^1$,
Stylianos Moschoglou${}^1$,
Yannis Panagakis${}^{1,2}$,
Stefanos Zafeiriou${}^1$\\
Imperial College London${}^1$\\
Middlesex University${}^2$\\
\texttt{\{g.chrysos, s.moschoglou, i.panagakis, s.zafeiriou\}@imperial.ac.uk}
}

\newcommand{\modelname}{PolyGAN}
\newcommand{\modelone}{Coupled CP decomposition}
\newcommand{\modeltwo}{Coupled nested CP decomposition}

\begin{document}

\maketitle

\begin{abstract}

Generative Adversarial Networks (GANs) have become the gold standard when it comes to learning generative models for high-dimensional distributions. Since their advent, numerous variations of GANs have been introduced in the literature, primarily focusing on utilization of novel loss functions, optimization/regularization strategies and network architectures. In this paper, we turn our attention to the generator and investigate the use of high-order polynomials as an alternative class of universal function approximators. Concretely, we propose PolyGAN, where we model the data generator by means of a high-order polynomial whose unknown parameters are naturally represented by high-order tensors. We introduce two tensor decompositions that significantly reduce the number of parameters and show how they can be efficiently implemented by hierarchical neural networks that only employ linear/convolutional blocks. We exhibit for the first time that by using our approach a GAN generator can approximate the data distribution without using \textsl{any} activation functions. Thorough experimental evaluation on both synthetic and real data (images and 3D point clouds) demonstrates the merits of PolyGAN against the state of the art.

\end{abstract}

\section{Introduction}

Generative Adversarial Networks (GANs) are currently one of the most popular lines of research in machine learning. Research on GANs mainly revolves around: (a) how to achieve faster and/or more accurate convergence (e.g., by studying different loss functions~\citep{nowozin2016f, arjovsky17principled, mao2017least} or regularization schemes~\citep{odena2018generator, miyato2018spectral, gulrajani2017improved}), and (b) how to design different hierarchical neural networks architectures composed of linear and non-linear operators that can effectively model high-dimensional distributions (e.g., by progressively training large networks \citep{karras2017progressive} or by utilizing deep ResNet type of networks as generators~\citep{brock2019large}). 

Even though hierarchical deep networks are efficient universal approximators for the class of continuous compositional functions \citep{mhaskar2016learning}, the non-linear \textbf{activation functions} pose difficulties in their theoretical analysis,  understanding, and interpretation. For instance, as illustrated in  \cite{arora2018convergence}, element-wise non-linearities pose a challenge on proving convergence, especially in an adversarial learning setting \citep{ji2018minimax}. Consequently, several methods, e.g., \cite{saxe2013exact, hardt2016identity, laurent2018deep, lampinen2018analytic}, focus only on linear models (with respect to the weights) in order to be able to rigorously  analyze the neural network dynamics, the residual design principle, local extrema and generalization error, respectively. Moreover, as stated in the recent in-depth comparison of many different GAN training schemes \citep{lucic2018gans}, the improvements may mainly arise from a higher computational budget and tuning and not from fundamental architectural choices.

In this paper, we depart from the choice of hierarchical neural networks that involve activation functions and investigate for the first time in the literature of GANs the use of high-order polynomials as an alternative class of universal function approximators for data generator functions. This choice is motivated by the strong evidence provided by the \textit{Stone–Weierstrass theorem}~\citep{stone1948generalized}, which states that every continuous function defined on a closed interval can be uniformly approximated as closely as desired by a polynomial function. Hence, we propose to model the vector-valued generator function $\bm{G}(\bm{z}): \realnum^{d} \to \realnum^{o}$ by a high-order multivariate polynomial of the latent vector $\bm{z}$, whose unknown parameters are naturally represented by high-order tensors. 

However, the number of parameters required to accommodate all higher-order correlations of the latent vector explodes with  the desired order of the polynomial and the
dimension of the latent vector. To alleviate this issue and at the same time capture interactions of parameters across different orders of approximation in a hierarchical manner, we cast polynomial parameters estimation as a coupled tensor factorization~\citep{papalexakis2016turbo, sidiropoulos2017tensor} that jointly factorizes all the polynomial parameters tensors. To this end, we introduce two specifically tailored coupled canonical polyadic (CP)-type of decompositions with shared factors. The proposed coupled decompositions of the parameters tensors result into two different hierarchical structures (i.e., architectures of neural network decoders) that do not involve \textsl{any} activation function, providing an intuitive way of generating samples with an increasing level of detail. This is pictorially shown in Figure~\ref{fig:powergan_skip_injections}. The result of the proposed \modelname{} using a fourth-order polynomial approximator is shown in  Figure~\ref{fig:powergan_skip_injections} (a), while Figure~\ref{fig:powergan_skip_injections} (b) shows the corresponding generation when removing the fourth-order power from the generator.

Our contributions are summarized as follows:
\begin{itemize}
    \item We model the data generator with a high-order polynomial. Core to our approach is to cast polynomial parameters estimation as a coupled tensor factorization with shared factors. To this end, we develop two coupled tensor decompositions and demonstrate how those two derivations result in different neural network architectures involving only linear (e.g., convolution) units. This approach reveals links between high-order polynomials, coupled tensor decompositions and network architectures.
    \item We experimentally verify that the resulting networks can learn to approximate functions with analytic expressions.
    \item We show how the proposed networks can be used with linear blocks, i.e., without utilizing activation functions, to synthesize high-order intricate signals, such as images.
    \item We demonstrate that by incorporating activation functions to the derived polynomial-based architectures,
    \modelname{} improves upon  three different GAN architectures, namely DCGAN~\citep{radford2015unsupervised}, SNGAN~\citep{miyato2018spectral} and SAGAN~\citep{zhang2018self}.
\end{itemize}

\begin{figure*}[!h]
    \centering
    \subfloat[]{\includegraphics[width=0.45\linewidth]{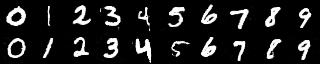}}
  \quad
    \subfloat[]{\includegraphics[width=0.45\linewidth]{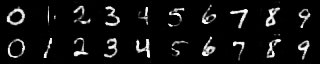}\hspace{2.9mm}}
\caption{Generated samples by an instance of the proposed \modelname. (a) Generated samples using a fourth-order polynomial and (b) the corresponding generated samples when removing the terms that correspond to the fourth-order. As evidenced, by extending the polynomial terms, \modelname{} generates samples with an increasing level of detail.}
\label{fig:powergan_skip_injections}
\end{figure*}

  \section{Method}
\label{sec:polygan_method}
In this Section, we investigate the use of a polynomial expansion as a function approximator for the data generator in the context of GANs. To begin with, we introduce the notation in Section~\ref{sec:polygan_notation}.
In Section~\ref{sec:polygan_decomposition}, we introduce two different polynomials models along with specifically tailored coupled tensor factorizations for the efficient estimation of their parameters.

\subsection{Preliminaries and notation}
\label{sec:polygan_notation}
Matrices (vectors) are denoted by uppercase (lowercase) boldface letters e.g., $\bm{X}$, ($\bm{x}$).  Tensors are denoted by calligraphic letters, e.g.,  $\bmcal{X}$. The
\textit{order} of a tensor is the number of indices needed to
address its elements. Consequently, each element of an $M$th-order tensor $\bmcal{X}$ is addressed by $M$ indices, i.e., $(\bmcal{X})_{i_{1}, i_{2}, \ldots, i_{M}} \doteq x_{i_{1}, i_{2}, \ldots, i_{M}}$.

 The \textit{mode-$m$ unfolding} of a tensor $\bmcal{X} \in
 \realnum^{I_1 \times I_2 \times \cdots \times I_M}$ maps
 $\bmcal{X}$ to a matrix $\bm{X}_{(m)} \in \realnum^{I_{m}
 \times \bar{I}_{m}}$ with $\bar{I}_{m}= \prod_{k=1 \atop k  \neq m}^M I_k $ such
 that the tensor element $x_{i_1, i_2, \ldots, i_M}$ is
 mapped to the matrix element $x_{i_{m}, j}$ where
 $j=1 + \sum_{k=1 \atop k \neq m}^M (i_k - 1) J_k$ with $J_k =
\prod_{n =1 \atop n \neq m}^{k-1} I_n $. 
The \textit{mode-$m$ vector product} of $\bmcal{X}$ with a
vector $\bm{u} \in \realnum^{I_m}$, denoted by
$\bmcal{X} \times_{n} \bm{u} \in \realnum^{I_{1}\times
I_{2}\times\cdots\times I_{n-1}  \times I_{n+1} \times
\cdots \times I_{N}} $, results in a tensor of order $M-1$:
\begin{equation}\label{E:Tensor_Mode_n}
(\bmcal{X} \times_{m} \bm{u})_{i_1, \ldots, i_{m-1}, i_{m+1},
\ldots, i_{M}} = \sum_{i_m=1}^{I_m} x_{i_1, i_2, \ldots, i_{M}} u_{i_m}.
\end{equation}
Furthermore,  we denote
$\bmcal{X} \times_{1} \bm{u}^{(1)} \times_{2} \bm{u}^{(2)} \times_{3}  \cdots \times_{M} \bm{u}^{(M)}  \doteq 
\bmcal{X} \prod_{m=1}^m \times_{m} \bm{u}^{(m)}$.

The \textit{Khatri-Rao} product (i.e., column-wise Kronecker product) of matrices $\bm{A} \in \realnum^{I \times N}$
and $\bm{B} \in \realnum^{J \times N}$ is
denoted by $\bm{A} \odot \bm{B}$ and yields a matrix of
dimensions $(IJ)\times N$.  The Hadamard product of $\bm{A} \in \realnum^{I \times N}$
and $\bm{B} \in \realnum^{I \times N}$ is defined as $\bm{A} * \bm{B}$ and is equal to ${A}_{(i, j)} {B}_{(i, j)}$ for the $(i, j)$ element.

The \textit{CP decomposition}~\citep{kolda2009tensor,Sidiropoulos:16}  factorizes a tensor into a sum of component rank-one tensors. An $M$th-order tensor $\bmcal{X} \in \realnum^{I_{1} \times
 I_{2} \times \cdots \times I_{M}}$ has \textit{rank}-$1$, when it is decomposed as the outer product of $M$ vectors $\{  \bm{u}^{ (m)}  \in \realnum^{I_{m}} \}_{m=1}^M$. That is,  $\bmcal{X} =  
 \bm{u}^{(1)}  \circ \bm{u}^{(2)}  \circ \cdots \circ \bm{u}^{(M)} \doteq  \bigcirc_{m =1}^M \bm{x}^{(m)}$, where $\circ$ denotes for the vector outer product. Consequently, the rank-$R$ CP decomposition of an $M$th-order tensor $\bmcal{X}$ is written as:
 \begin{equation}\label{E:CP}
\bmcal{X}  \doteq [\![ \bm{U}\matnot{1}, \bm{U}\matnot{2}, \ldots, \bm{U}\matnot{M}  ]\!] =  \sum_{r=1}^R \bm{u}_r^{(1)}  \circ \bm{u}_r^{(2)}  \circ \cdots \circ \bm{u}_r^{(M)},
\end{equation}
where the factor matrices $\big\{ \bm{U}\matnot{m} = [\bm{u}_1^{(m)},\bm{u}_2^{(m)}, \cdots, \bm{u}_R^{(m)} ] \in \mathbb{R}^{I_m \times R} \big\}_{m=1}^{M}$ collect 
the vectors from the rank-one components. By considering the mode-$1$ unfolding of $\bmcal{X}$, the CP decomposition can be written in matrix form as \citep{kolda2009tensor}:
 \begin{equation}
 \label{eq:polygan_cp_unfolding}
\bmcal{X}_{(1)}  =  \bm{U}\matnot{1} \bigg( \bm{U}\matnot{M} \odot \bm{U}\matnot{M-1} \odot \cdots \odot \bm{U}\matnot{2} \bigg)^T \doteq \bm{U}\matnot{1} \bigg( \bigodot_{m = M}^{2} \bm{U}\matnot{m}\bigg)^T
\end{equation}
 
More details on tensors and multilinear operators can be found in \citet{kolda2009tensor,Sidiropoulos:16}.

\subsection{High-order polynomial generators}
\label{sec:polygan_decomposition}

GANs typically consist of two deep networks, namely a generator $G$ and a discriminator $D$. $G$ is a decoder (i.e., a function approximator of the sampler of the target distribution) which receives as input a random noise vector $\bm{z} \in \realnum^{d}$ and outputs a sample $\bm{x} = G(\bm{z}) \in \realnum^{o}$. $D$ receives as input both $G(\bm{z})$ and real samples and tries to differentiate the fake and the real samples. During training, both $G$ and $D$ compete against each other till they reach an ``equilibrium'' \citep{goodfellow2014generative}. In practice, both the generator and the discriminator are modeled as deep neural networks, involving composition of linear and non-linear operators~\citep{radford2015unsupervised}.

\begin{table}[h]
\caption{Nomenclature}
\label{tbl:polygan_primary_symbols}
\begin{tabular}{|c | c | c|}
\toprule
Symbol 	& Dimension(s) 		&	Definition \\
\midrule
$n, N$ 		            & $\naturalnum$		            &	Polynomial term order, total approximation order. \\
$k$ 		            & $\naturalnum$		            & Rank of the decompositions. \\
$\bm{z}$            & $\realnum^d$                      & Input to the polynomial approximator, i.e., generator. \\
$\bm{C}, \bm{\beta}$ 		    & $\realnum^{o\times k}, \realnum^{o}$		        &	Parameters in both decompositions. \\
$\bm{A}\matnot{n}, \bm{S}\matnot{n}, \bm{B}\matnot{n}$          &       $\realnum^{d\times k}, \realnum^{k\times k}, \realnum^{\omega\times k}$     & Matrix parameters in the hierarchical decomposition.\\
$\odot, *$          &   -       & Khatri-Rao product, Hadamard product. \\
 \hline
\end{tabular}
\end{table}

In this paper, we focus on the generator. Instead of modeling the generator as a composition of linear and non-linear functions, we assume that each generated pixel $x_i = (G(\bm{z}))_i$ may be expanded as a $N^{th}$ order polynomial\footnote{With an $N^{th}$ order polynomial we can approximate any smooth function \citep{stone1948generalized}.} in $\bm{z}$. That is,

\begin{equation}
  x_i = (G(\bm{z}))_i = \beta_i + {\bm{w}_i^{[1]}}^T\bm{z} + \bm{z}^T \bm{W}_i^{[2]}\bm{z} + \bmcal{W}_i^{[3]}\times_1\bm{z}\times_2\bm{z}\times_3\bm{z} + \cdots + \bmcal{W}_i^{[N]}\prod_{n=1}^N \times_{n} \bm{z},
\label{eq:elemet_wise_exp}
\end{equation}
where the scalar $\beta_i$, and the set of tensors $\big\{\bmcal{W}_i^{[n]} \in \realnum^{\prod_{m=1}^n\times_m d}\big\}_{n=1}^N$ are the parameters of the polynomial expansion associated to each output of the generator, e.g., pixel. Clearly, when $n=1$, the weights are $d$-dimensional vectors; when $n=2$, the weights, i.e., $\bm{W}_i^{[2]}$, form a $d\times d$ matrix. For higher orders of approximation, i.e., when $n\geq3$, the weights are $n^{th}$ order tensors.

By stacking the parameters for all pixels, we define the parameters $\bm{\beta} \doteq [\beta_1, \beta_2, \ldots, \beta_o ]^T \in \realnum^o$ and $\big\{\bmcal{W}^{[n]} \in  \realnum^{o\times \prod_{m=1}^{n}\times_m d}\big\}_{n=1}^N$. Consequently, the vector-valued generator function is expressed as:
\begin{equation}
    G(\bm{z}) = \sum_{n=1}^N \bigg(\bmcal{W}^{[n]} \prod_{j=2}^{n+1} \times_{j} \bm{z}\bigg) + \bm{\beta}
    \label{eq:general_approx}
\end{equation}
Intuitively, (\ref{eq:general_approx}) is an expansion which allows the $N^{th}$ order interactions between the elements of the noise latent vector $\bm{z}$. Furthermore, (\ref{eq:general_approx}) resembles the functional form of a truncated Maclaurin expansion of vector-valued functions. In the case of a Maclaurin expansion, $\bmcal{W}^{[n]}$ represent the $n^{th}$ order partial derivatives of a known function. However, in our case the generator function is unknown and hence all the parameters need to be estimated from training samples.

The number of the unknown parameters in (\ref{eq:general_approx})
is $(d^{N+1} -1)\frac{o}{d-1}$, which grows  exponentially with the order of the approximation. Consequently,
the model of (\ref{eq:general_approx}) is  prone to overfitting and its training is computationally demanding.  

A natural approach to reduce the number of parameters is to assume that the weights exhibit redundancy and hence the parameter tensors are of low-rank.
To this end, several low-rank tensor decompositions can be employed \citep{kolda2009tensor,Sidiropoulos:16}. For instance,
let the parameter tensors $\bmcal{W}^{[n]}$ admit a CP decompostion \citep{kolda2009tensor} of mutilinear rank-$k$, namely, 
$\{ \bmcal{W}^{[n]} = [\![ \bm{U}_{[n], 1}, \bm{U}_{[n], 2}, \ldots, \bm{U}_{[n], (n+1)}  ]\!] \}_{n=1}^N$, with $\bm{U}_{[n], 1} \in  \realnum^{o\times k}$, and $\bm{U}_{[n], m} \in  \realnum^{d\times k}$, for $m=2,\dots,n+1$.
Then, (\ref{eq:general_approx}) is expressed as 
\begin{equation}
    G(\bm{z}) = \sum_{n=1}^N \bigg(
    [\![ \bm{U}_{[n], 1}, \bm{U}_{[n], 2}, \ldots, \bm{U}_{[n], (n+1)}  ]\!] \prod_{j=2}^{n+1} \times_{j} \bm{z}\bigg) + \bm{\beta},
    \label{eq:CP_approx}
\end{equation}
which has significantly less parameters than (\ref{eq:general_approx}), especially when $k \ll d$.
However, a set of different factor matrices for each level of approximation are required in \eqref{eq:CP_approx}, and hence
the hierarchical nature of images is not taken into account. 
To promote compositional structures and capture interactions among parameters in different orders of approximation we introduce next two coupled CP decompositions with shared factors.

\textbf{Model 1: \modelone{}}: 

Instead of factorizing each parameters tensor individually we propose to jointly factorize all the parameter tensors using a coupled CP decomposition with a specific pattern of factor sharing. To illustrate the factorization, we assume a third order approximation ($N=3$), however in the \supplementary{}
a generalization to $N$-th order approximation is provided. 
Let us assume that the parameters tensors admit the following coupled CP decomposition with the factors corresponding to lower-order levels of approximation being shared across all parameters tensors. That is: 
\begin{itemize}
    \item Let $\bm{W}^{[1]} = \bm{C}\bm{U}\matnot{1}^T$, be the parameters for first level of approximation.
    \item Let assume $\bmcal{W}^{[2]}$ being a superposition of of two weights tensors, namely $\bmcal{W}^{[2]} = \bmcal{W}^{[2]}_{1:2} + \bmcal{W}^{[2]}_{1:3}$, with $\bmcal{W}^{[2]}_{i:j}$ denoting parameters associated with the second order interactions across the $i$-th and $j$-th order of approximation. By enforcing the CP decomposition of the above tensors to share the factor with tensors corresponding to lower-order of approximation we obtain in matrix form: $\bm{W}^{[2]}_{(1)} = \bm{C}(\bm{U}\matnot{3} \odot \bm{U}\matnot{1})^T + \bm{C}(\bm{U}\matnot{2} \odot \bm{U}\matnot{1})^T$.
    \item Similarly, we enforce the third-order parameters tensor to admit the following CP decomposition (in matrix form) $\bm{W}^{[3]}_{(1)} = \bm{C}(\bm{U}\matnot{3} \odot \bm{U}\matnot{2} \odot \bm{U}\matnot{1})^T $. Note that all but the $\bm{U}\matnot{3}$ factor matrices are shared in the factorization of tensors capturing polynomial parameters for the first and second order of approximation.
\end{itemize}

The parameters are $\bm{C} \in \realnum^{o\times k}, \bm{U}\matnot{m} \in  \realnum^{d\times k}$ for $m=1,2,3$. Then, (\ref{eq:CP_approx}) for $N=3$ is written as:

\begin{equation}
\begin{split}
    G(\bm{z}) = \bm{\beta} + \bm{C}\bm{U}\matnot{1}^T\bm{z} + \bm{C}\Big(\bm{U}\matnot{3} \odot \bm{U}\matnot{1}\Big)^T(\bm{z} \odot \bm{z}) + \bm{C}\Big(\bm{U}\matnot{2} \odot \bm{U}\matnot{1}\Big)^T(\bm{z} \odot \bm{z}) + \\
    \bm{C}\Big(\bm{U}\matnot{3} \odot \bm{U}\matnot{2} \odot \bm{U}\matnot{1}\Big)^T(\bm{z} \odot \bm{z} \odot \bm{z})
\label{eq:polygan_recursive_gen_third_order}
\end{split}
\end{equation}

The third order approximation of (\ref{eq:polygan_recursive_gen_third_order}) can be implemented as a neural network with the structure of Figure~\ref{fig:polygan_model1_schematic} (proved in section~\ref{sec:polygan_theoretical_suppl}, Claim~\ref{lemma:polygan_additive_cp_model_hadamard_third_order} of the \supplementary). It is worth noting that the structure of the proposed network allows for incremental network growth.

\begin{figure*}[t]
    \centering
    \includegraphics[width=1\linewidth]{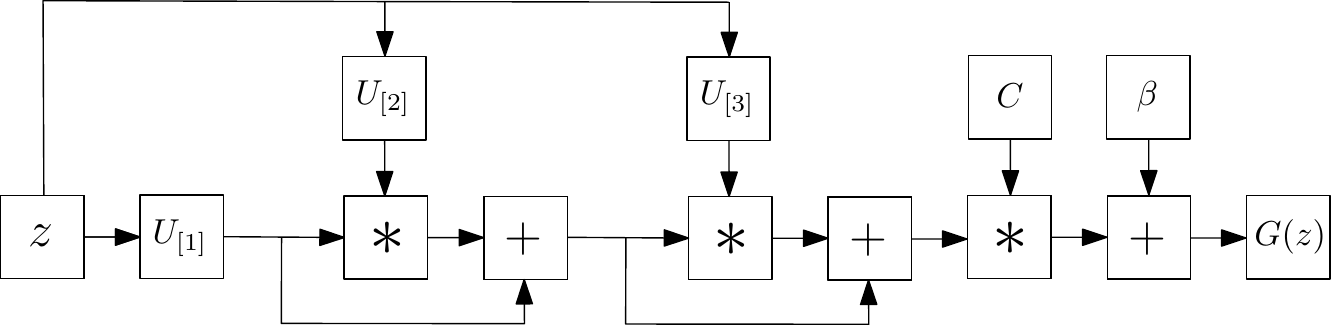}
\caption{Schematic illustration of the \modelone{} (for third order approximation). Symbol $*$ refers to the Hadamard product.}
\label{fig:polygan_model1_schematic}
\end{figure*}

\textbf{Model 2: \modeltwo{}}: Instead of explicitly separating the interactions between layers, we can utilize a joint hierarchical decomposition on the polynomial parameters. Let us first introduce learnable hyper-parameters $\big\{\bm{b}\matnot{n} \in \realnum^\omega\big\}_{n=1}^N$, which act as scaling factors for each parameter tensor. Therefore, we modify (\ref{eq:general_approx}) to:

\begin{equation}
    G(\bm{z}) = \sum_{n=1}^N \bigg(\bmcal{W}^{[n]} \times_2 \bm{b}\matnot{n} \prod_{j=3}^{n+2} \times_{j} \bm{z}\bigg) + \bm{\beta},
    \label{eq:polygan_general_polynomial_with_b}
\end{equation}

with $\big\{\bmcal{W}^{[n]} \in  \realnum^{o\times \omega \times \prod_{m=1}^{n}\times_m d}\big\}_{n=1}^N$. For illustration purposes, we consider a third order function approximation ($N=3$). That is,

\begin{equation}
    G(\bm{z}) = \bm{\beta} + \bmcal{W}^{[1]}\times_2\bm{b}\matnot{1}\times_3\bm{z} + \bmcal{W}^{[2]}\times_2\bm{b}\matnot{2}\times_3\bm{z}\times_4\bm{z} + 
    \bmcal{W}^{[3]}\times_2\bm{b}\matnot{3}\times_3\bm{z}\times_4\bm{z}\times_5\bm{z}
    \label{eq:polygan_third_order_init}
\end{equation}

To estimate its parameters we jointly factorize all parameters tensors by employing nested CP detecomposion with parameter sharing as follows (in matrix form)

\begin{itemize}
    \item First order parameters : $\bm{W}^{[1]}_{(1)} = \bm{C} (\bm{A}\matnot{3} \odot \bm{B}\matnot{3})^T$.
    \item Second order parametes: $\bm{W}^{[2]}_{(1)} = \bm{C} \bigg\{\bm{A}\matnot{3} \odot \bigg[\Big(\bm{A}\matnot{2} \odot \bm{B}\matnot{2}\Big) \bm{S}\matnot{3}\bigg]\bigg\}^T$.
    \item Third order parameters: $\bm{W}^{[3]}_{(1)} = \bm{C} \bigg\{\bm{A}\matnot{3} \odot \bigg[\bigg(\bm{A}\matnot{2} \odot \Big\{\Big(\bm{A}\matnot{1} \odot \bm{B}\matnot{1}\Big) \bm{S}\matnot{2}\Big\} \bigg)\bm{S}\matnot{3} \bigg]\bigg\}^T$
\end{itemize}

with $\bm{C} \in  \realnum^{o\times k}, \bm{A}\matnot{n} \in  \realnum^{d\times k}, \bm{S}\matnot{n} \in  \realnum^{k\times k}, \bm{B}\matnot{n} \in  \realnum^{\omega\times k}$ for $n=1,\ldots,N$.  
Altogether, (\ref{eq:polygan_third_order_init}) is written as:

\begin{equation}
\begin{split}
    G(\bm{z}) = \bm{\beta} + \bm{C} (\bm{A}\matnot{3} \odot \bm{B}\matnot{3})^T (\bm{z} \odot \bm{b}\matnot{3}) + 
    \bm{C} \bigg\{\bm{A}\matnot{3} \odot \bigg[\Big(\bm{A}\matnot{2} \odot \bm{B}\matnot{2}\Big) \bm{S}\matnot{3}\bigg]\bigg\}^T \Big(\bm{z} \odot \bm{z} \odot \bm{b}\matnot{2} \Big) + \\
    \bm{C} \bigg\{\bm{A}\matnot{3} \odot \bigg[\bigg(\bm{A}\matnot{2} \odot \Big\{\Big(\bm{A}\matnot{1} \odot \bm{B}\matnot{1}\Big) \bm{S}\matnot{2}\Big\} \bigg)\bm{S}\matnot{3} \bigg]\bigg\}^T \Big(\bm{z} \odot \bm{z} \odot \bm{z} \odot \bm{b}\matnot{1}\Big) 
    \label{eq:polygan_third_order_decomp_init}
\end{split}
\end{equation}

As we prove in the \supplementary{} (section~\ref{sec:polygan_theoretical_suppl}, Claim~\ref{lemma:polygan_lemma_third_order}), (\ref{eq:polygan_third_order_decomp_init}) can be implemented in a hierarchical manner with a three-layer neural network as shown in Figure~\ref{fig:polygan_core_module}.

\textbf{Comparison between the two models}: Both models are based on the polynomial expansion, however there are few differences between those. The \modelone{} has a simpler expression, however the \modeltwo{} relates to standard architectures using hierarchical composition that has recently yielded promising results in GANs (see Section~\ref{sec:polygan_related}). In the remainder of the paper, we use the \modeltwo{} by default; in Section~\ref{sec:polygan_suppl_model_comparison}, we include an experimental comparison of the two models. The experimental comparison demonstrates that neither model outperforms the other in all datasets; they perform similarly.

\begin{figure*}[!h]
    \centering
    \includegraphics[width=1\linewidth]{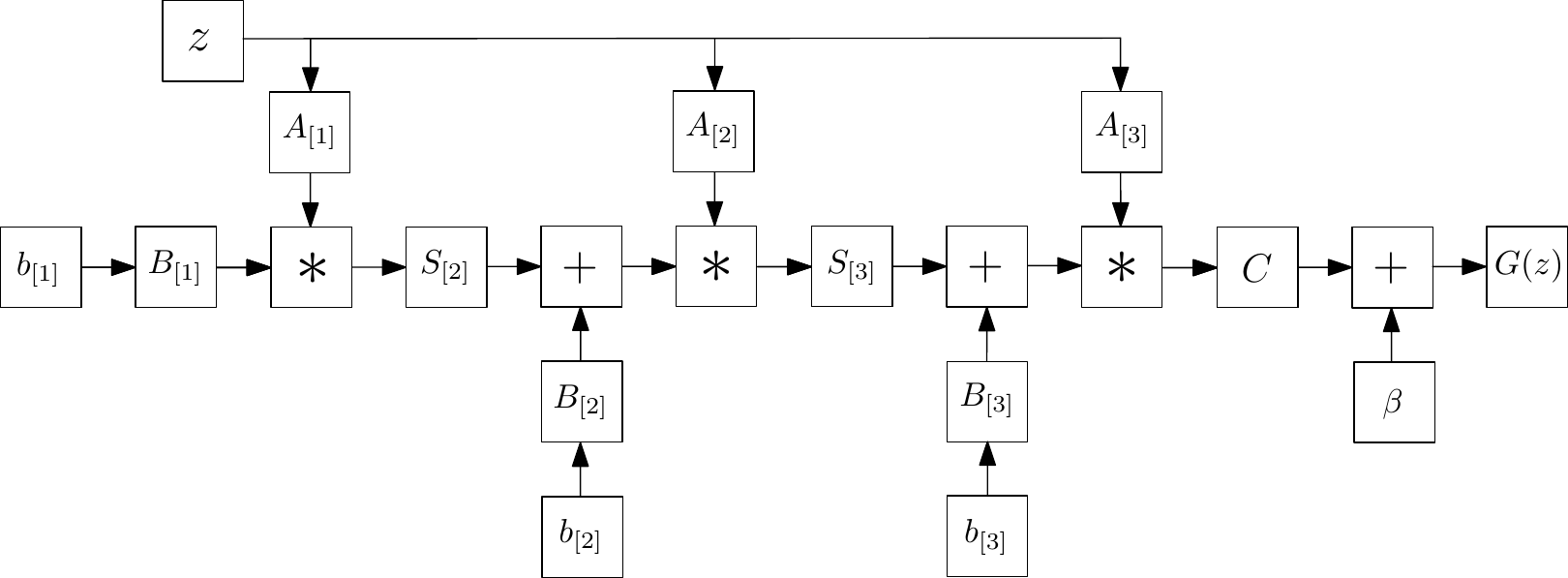}
\caption{Schematic illustration of the \modeltwo{} (for third order approximation). Symbol $*$ refers to the Hadamard product.}
\label{fig:polygan_core_module}
\end{figure*}

 \section{Related work}
\label{sec:polygan_related}
The literature on GANs is vast; we focus only on the works most closely related to ours. The interested reader can find further information in a recent survey~\citep{creswell2018generative}.

\citet{berthelot2017began} use skip connections to concatenate the noise $\bm{z}$ in deeper layers in the generator. The recent BigGAN \citep{brock2019large} performs a hierarchical composition through skip connections from the noise $\bm{z}$ to multiple resolutions of the generator. In their implementation, they split $\bm{z}$ into one chunk per resolution and concatenate each chunk (of $\bm{z}$) to the respective resolution.

Despite the propagation of the noise $\bm{z}$ to successive layers, the aforementioned works have substantial differences from ours. We introduce a well-motivated and mathematically elaborate method to achieve a more precise approximation with a polynomial expansion. In contrast to the previously mentioned works, we also do not concatenate the noise with the feature representations, but rather perform multiplication of the noise with the feature representations, which we mathematically justify. 

The work that is most closely related to ours is the recently proposed StyleGAN~\citep{karras2018style}, which is an improvement over the Progressive Growing of GANs (ProGAN)~\citep{karras2017progressive}. As ProGAN, StyleGAN is a highly-engineered network that achieves compelling results on synthesized 2D images. In order to provide an explanation on the improvements of StyleGAN over ProGAN, the authors adopt arguments from the style transfer literature \citep{huang2017arbitrary}. Nevertheless, the idea of style transfer proposes to use features from images for conditional image translation, which is very different to unsupervised samples (image) generation. We believe that these improvements can be better explained under the light of our proposed polynomial function approximation. That is, as we show in Figure~\ref{fig:powergan_skip_injections}, the Hadamard products build a hierachical decomposition with increasing level of detail (rather than different styles).  
In addition, the improvements in StyleGAN~\citep{karras2018style} are demonstrated by using a well-tuned model. In this paper we showcase that without any complicated engineering process the polynomial generation can be applied into several architectures (or any other type of decoders) and consistently improves the performance.

\section{Experiments}
\label{sec:polygan_experiments}
A sequence of experiments in both synthetic data ($2D$ and $3D$ data manifolds) and higher-dimensional signals are conducted to assess the empirical performance of the proposed polynomial expansion. The first experiments are conducted on a $2D$ manifolds that are analytically known (Section~\ref{sec:polygan_synthetic_2d}). Further experiments on three $3D$ manifolds are deferred to the \supplementary{} (Section~\ref{ssec:polygan_experiments_synthetic_suppl}). In Section~\ref{sec:polygan_linear_mnist}, the polynomial expansion is used for synthesizing digits. Experiments on images beyond digits are conducted in Section~\ref{sec:polygan_image_generation_linear_suppl}; more specifically, we experiment with images of faces and natural scenes. The experiments with such images demonstrate how polynomial expansion can be used for learning highly complex distributions by using a single activation function in the generator. Lastly, we augment our polynomial-based generator with non-linearities and show that this generator is at least as powerful as contemporary architectures.

Apart from the polynomial-based generators, we implemented two variations that are considered baselines: (a) `Concat': we replace the Hadamard operator with concatenation (used frequently in recent methods, such as in \citet{brock2019large}), (b) `Orig': the Hadamard products are ditched, while use $\bm{b}\matnot{1} \gets \bm{z}$, i.e., there is a composition of linear layers that transform the noise $\bm{z}$.

\subsection{Synthetic experiment on 2D manifold}
\label{sec:polygan_synthetic_2d}

\textbf{Sinusoidal}: We assess the polynomial-based generator on a sinusoidal function in the bounded domain $[0, 2\pi]$. Only linear blocks, i.e., no activation functions, are used in the generator. That is, all the element-wise non-linearities (such as ReLU's, $\tanh$) are ditched. The distribution we want to match is a $\sin{x}$ signal. The input to the generator is $z \in \realnum$ and the output is $[x, \sin{x}]$ with $x \in [0, 2\pi]$. We assume a $12^{th}$ order approximation where each $\bm{S}\matnot{i}, \bm{A}\matnot{i}$ is a fully-connected layer and $\bm{B}\matnot{i}$ is an identity matrix. Each fully-connected layer has width $15$. In Figure~\ref{fig:polygan_linear_sin}, $2,000$ random samples are synthesized. We indeed verify that in low-dimensional distributions, such as the univariate sinusoidal, \modelname{} indeed approximates the data distribution quite accurately without using any non-linear activation functions. 

\begin{figure}[h]
    \subfloat[GT]{\includegraphics[width=0.28\linewidth]{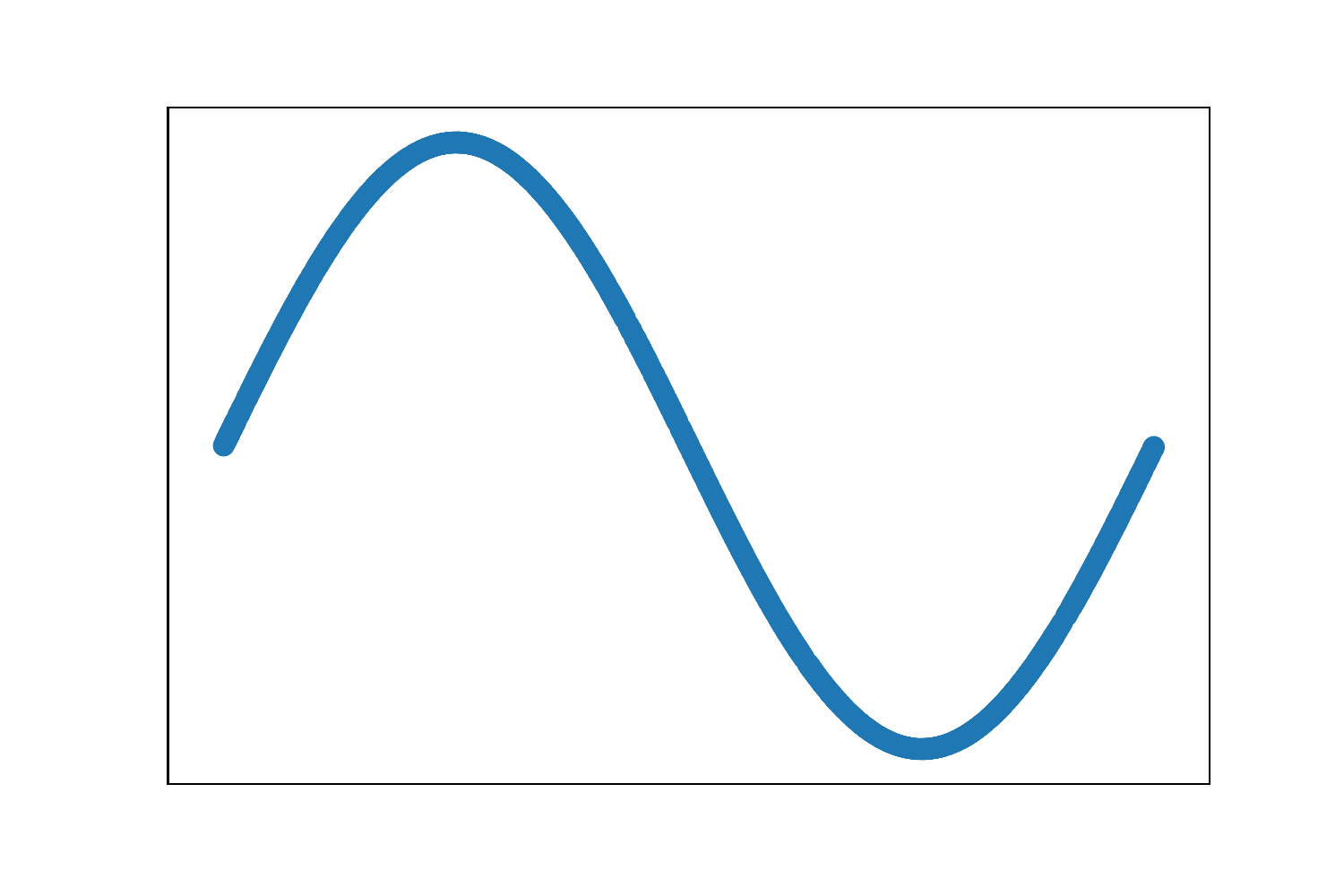}\hspace{-4mm}}
    \subfloat[Orig]{\includegraphics[width=0.28\linewidth]{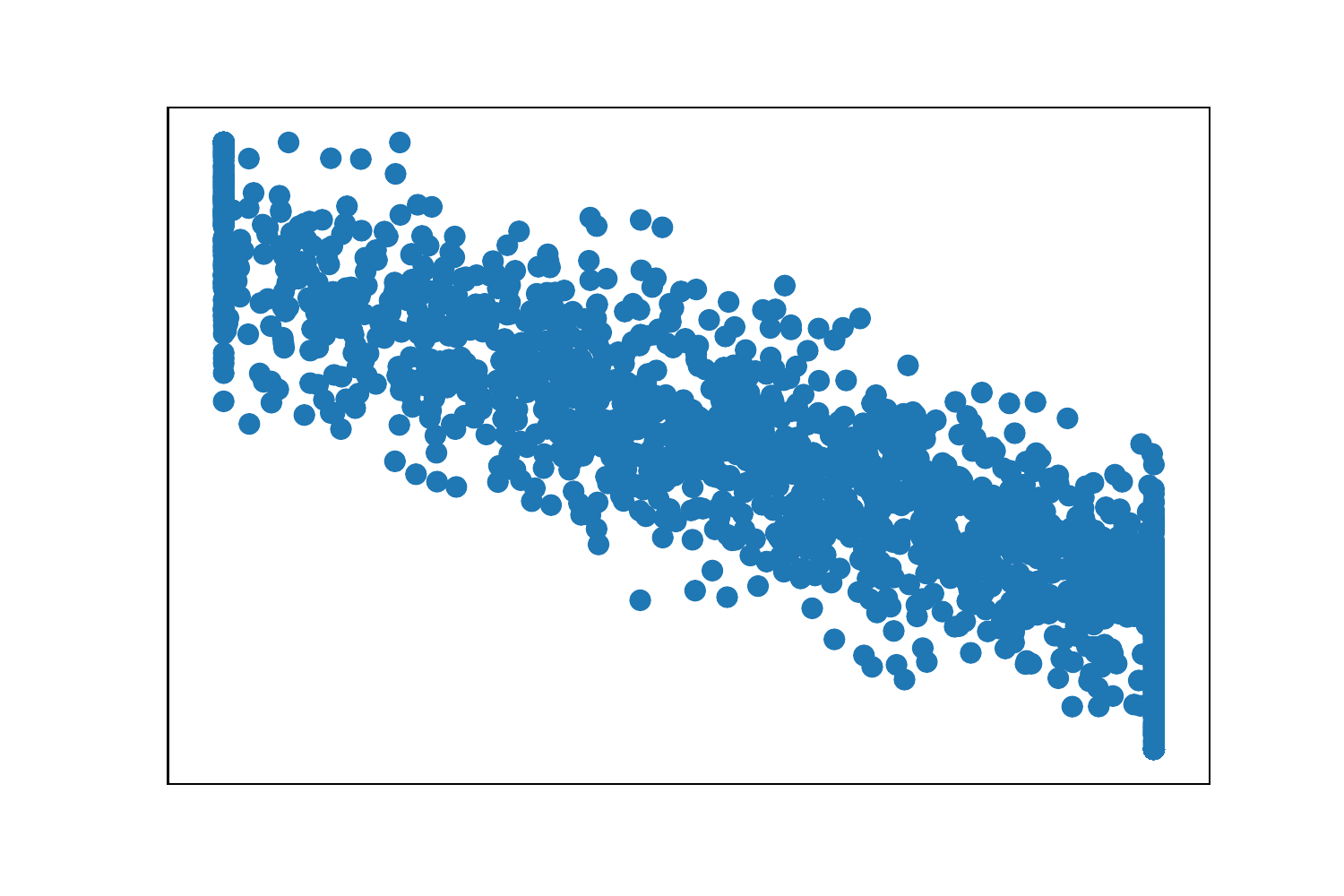}\hspace{-4mm}}
    \subfloat[Concat]{\includegraphics[width=0.28\linewidth]{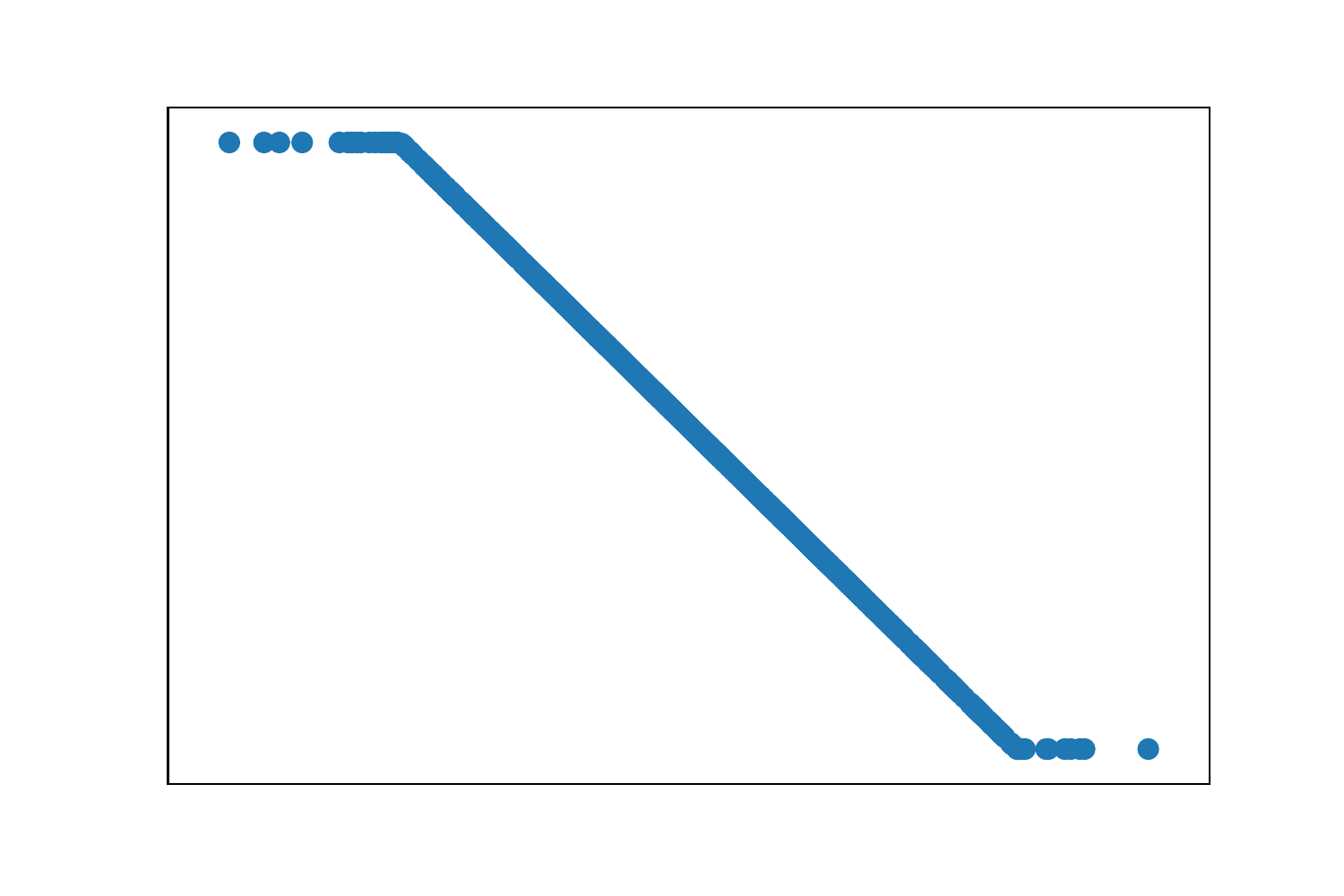}\hspace{-4mm}}
    \subfloat[\modelname]{\includegraphics[width=0.28\linewidth]{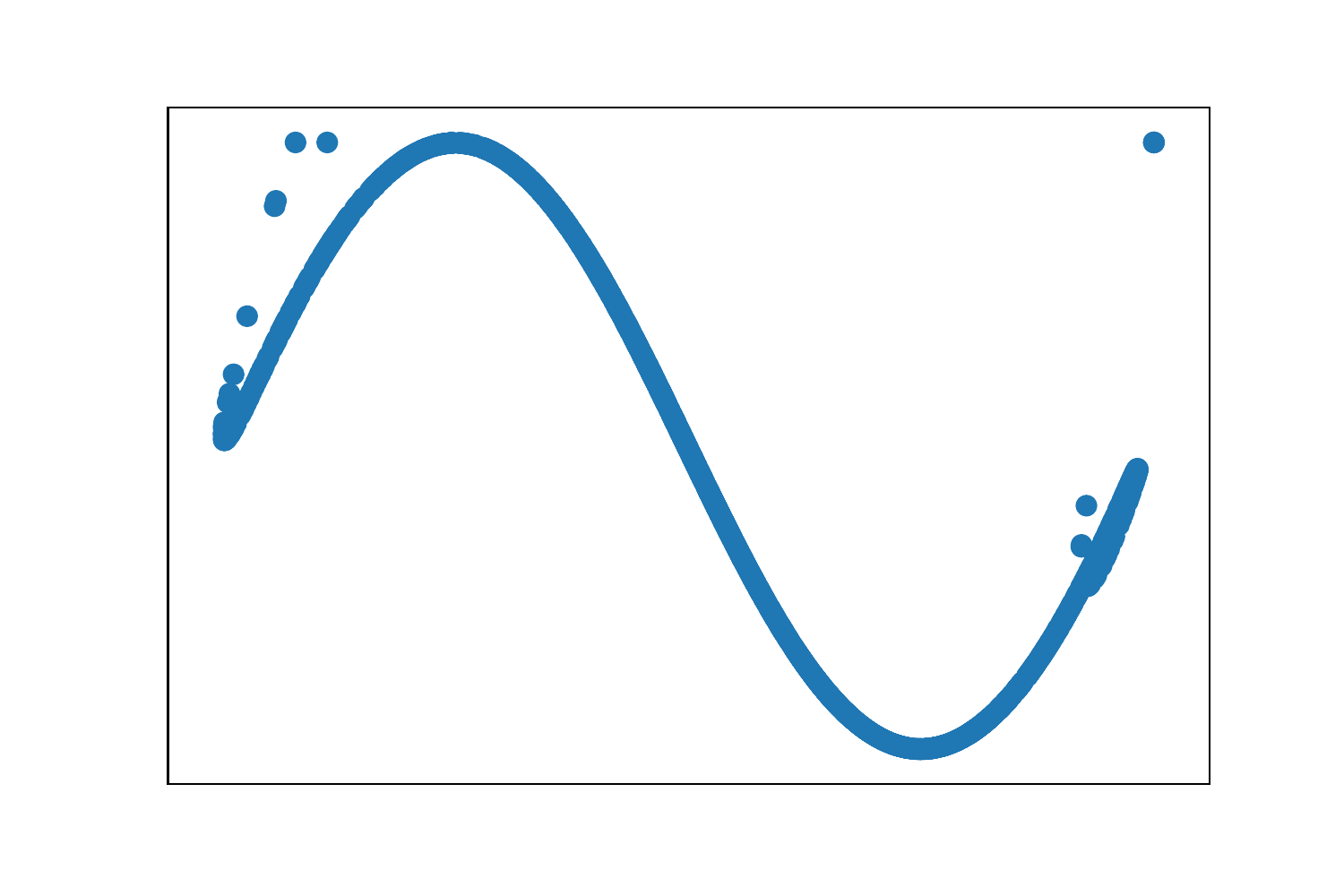}\hspace{-4mm}}
\caption{Synthesized data for learning the $[x, \sin{x}]$ signal. No activation functions are used in the generators. From left to right: (a) the data distribution, (b) `Orig', (c) `Concat', (d) \modelname. Notably, neither `Orig' nor `Concat' can learn to approximate different Taylor terms.}
\label{fig:polygan_linear_sin}
\end{figure}

\begin{figure}[htb]
    \subfloat[GT]{\includegraphics[width=0.24\linewidth]{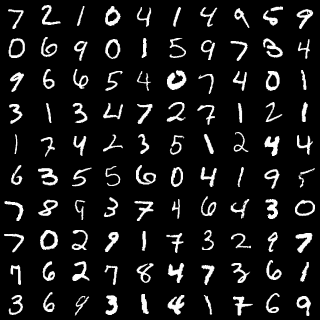}\hspace{1mm}}
    \subfloat[Orig]{\includegraphics[width=0.24\linewidth]{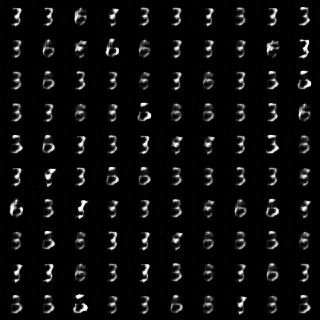}\hspace{1mm}}
    \subfloat[Concat]{\includegraphics[width=0.24\linewidth]{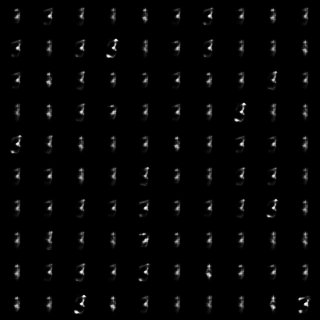}\hspace{1mm}}
    \subfloat[\modelname]{\includegraphics[width=0.24\linewidth]{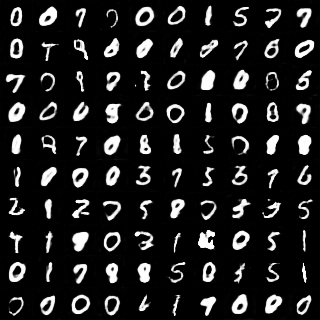}\hspace{1mm}}
\caption{Synthesized data for MNIST with a single activation in the generator. From left to right: (a) The ground-truth signals, (b) `Orig', (c) `Concat', (d) \modelname.}
\label{fig:polygan_linear_mnist_visual}
\end{figure}

\subsection{Digit generation with linear blocks}
\label{sec:polygan_linear_mnist}
The linear generator of the previous section is extended to greyscale images, in which an analytic expression of the ground-truth distribution remains elusive. To our knowledge, there has not been a generation of greyscale images based on polynomial expansion in the past. 

We capitalize on the expressivity of the recent resnet-based generator~\citep{miyato2018spectral, brock2019large}, to devise a new polynomial generator $G(\bm{z}) : \realnum^{128} \to \realnum^{32x32}$. We consider a fourth-order approximation (as derived in (\ref{eq:general_approx})) where $\bm{B}\matnot{i}$ is the identity matrix, $\bm{S}\matnot{i}$ is a residual block with two convolutions for $i=1,\dots,4$. We emphasize that the residual block as well as all layers are \emph{linear}, i.e., there are no activation functions. We only add a $\tanh$ in the output of the generator for normalization purposes. The discriminator and the optimization procedure are the same as in SNGAN; the only difference is that we run one discriminator step per generator step ($n_{dis} = 1$). Note that the `Orig' resnet-based generator resembles the generator of \citet{miyato2018spectral} in this case.

We perform digit generation (trained on MNIST~\citep{lecun1998gradient}). In Figure~\ref{fig:polygan_linear_mnist_visual}, random samples are visualized for the three compared methods. Note that the two baselines have practically collapsed into a single number each, whereas \modelname{} does synthesize plausible digits. 

To further assist the generation process, we utilize the labels and train a conditional GAN. That is, the class labels are used for conditional batch normalization. As illustrated in Figure~\ref{fig:polygan_linear_mnist_conditional_visual}, the samples synthesized are improved over the unsupervised setting. `Orig' and `Concat' still suffer from severe mode collapse, while \modelname{} synthesizes digits that have different thickness (e.g. $9$), style (e.g. $2$) and rotation (e.g. $1$).

\begin{figure}[htb]
    \subfloat[GT]{\includegraphics[width=0.24\linewidth]{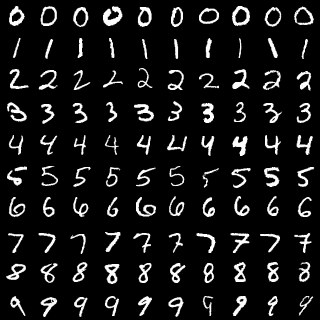}\hspace{1mm}}
    \subfloat[Orig]{\includegraphics[width=0.24\linewidth]{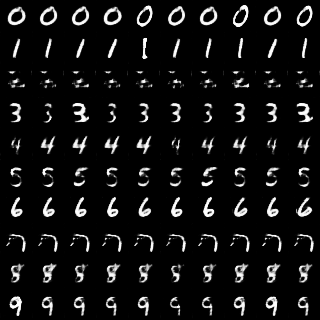}\hspace{1mm}}
    \subfloat[Concat]{\includegraphics[width=0.24\linewidth]{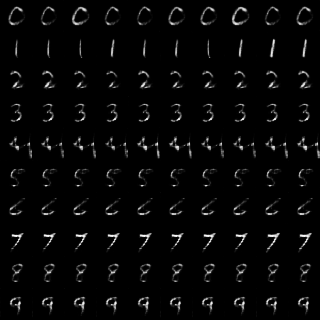}\hspace{1mm}}
    \subfloat[\modelname]{\includegraphics[width=0.24\linewidth]{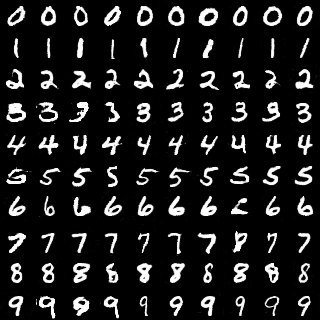}\hspace{1mm}}
\caption{Conditional digit generation. Note that both `Orig' and `Concat' suffer from severe mode collapse (details in section~\ref{sec:polygan_linear_mnist}). On the contrary, \modelname{} synthesizes digits that have different thickness (e.g. $9$), style (e.g. $2$) and rotation (e.g. $1$).}
\label{fig:polygan_linear_mnist_conditional_visual}
\end{figure} \section{Conclusion}
\label{sec:power_gan_conclusion}

We express data generation as a polynomial expansion task. We model the high-order polynomials with tensorial factors. We introduce two tailored coupled decompositions and show how the polynomial parameters can be implemented by hierarchical neural networks, e.g. as generators in a GAN setting. We exhibit how such polynomial-based generators can be used to synthesize images by utilizing only linear blocks. In addition, we empirically demonstrate that our polynomial expansion can be used with non-linear activation functions to improve the performance of standard state-of-the-art architectures. Finally, it is worth mentioning that our approach reveals links between high-order polynomials, coupled tensor decompositions and network architectures. 

\bibliographystyle{iclr2020_conference}
\bibliography{egbib}

\newpage
\appendix

\section{Introduction}
\label{sec:power_gan_introduction_suppl}

The appendix is organized as: 
\begin{itemize}
    \item Section~\ref{sec:polygan_theoretical_suppl} provides the Lemmas and their proofs required for our derivations. 
    \item Section~\ref{sec:polygan_method_suppl} generalizes the \modelone{} for $N^{th}$ order expansion.
    \item Section~\ref{sec:polygan_experiments_suppl} extends the experiments to 3D manifolds.
    \item In Section~\ref{sec:polygan_image_generation_linear_suppl}, additional experiments on image generation with linear blocks are conducted.
    \item Comparisons with popular GAN architectures are conducted in Section~\ref{sec:polygan_experiments_sota_networks}. Specifically, we utilize three popular generator architectures and devise their polynomial equivalent and perform comparisons on image generation. We also conduct an ablation study indicating how standard engineering techniques affect the image generation of the polynomial generator.
    \item In Section~\ref{sec:polygan_suppl_model_comparison}, a comparison between the two proposed decompositions is conducted on data distributions from the previous Sections.
\end{itemize}

\begin{table}
\begin{tabular}{l  l}
\parbox{0.48\textwidth}{
\begin{algorithm}[H]
	\begin{small}
	\SetKwInOut{Input}{Input}
	\SetKwInOut{Output}{Output}
	\SetKwInOut{Init}{Initialize}
	\Input{Noise $\bm{z} \in \realnum^{d}, N \in \naturalnum$}
	\Output{$\bm{x} \in \realnum^{o}$}

        \% global transformation(s) of $\bm{z}$.
        
        {$\bm{v}$ = Linear($\bm{z}$)}
        
        \% first layer.
        
        $\bm{\kappa} = (\bm{U}\matnot{1})^T \bm{v}$
        
        \For{n=2:N}{
            \% Perform the Hadamard product for the $n^{th}$ layer.
            
            $\bm{\kappa} = \bm{\kappa} + \Big((\bm{U}\matnot{n})^T \bm{v} \Big) * \bm{\kappa}$
            
        }
        $\bm{x} = \bm{\beta} + \bm{C} \bm{\kappa}$.
        \caption{\modelname{} (model 1).}
	\label{alg:polygan_generator_model1}
	\end{small}
\end{algorithm}

}
&
\parbox{0.48\textwidth}{
\begin{algorithm}[H]
	\begin{small}
	\SetKwInOut{Input}{Input}
	\SetKwInOut{Output}{Output}
	\SetKwInOut{Init}{Initialize}
	\Input{Noise $\bm{z} \in \realnum^{d}, N \in \naturalnum$}
	\Output{$\bm{x} \in \realnum^{o}$}

        \% global transformation(s) of $\bm{z}$.
        
        {$\bm{v}$ = Linear($\bm{z}$)}
        
        \% first layer.
        
        $\bm{\kappa} = \Big((\bm{B}\matnot{1})^T \bm{b}\matnot{1} \Big) * \Big((\bm{A}\matnot{1})^T \bm{v}\Big)$
        
        \For{n=2:N}{
            \% Multiply with the current layer weight $\bm{S}\matnot{n}$ and perform the Hadamard product.
            $\bm{\kappa} = \Big(\bm{S}\matnot{n} \bm{\kappa} + (\bm{B}\matnot{n})^T \bm{b}\matnot{n} \Big) * \Big((\bm{A}\matnot{n})^T \bm{v}\Big)$
            
        }
        $\bm{x} = \bm{\beta} + \bm{C} \bm{\kappa}$.
        \caption{\modelname{} (model 2).}
	\label{alg:polygan_generator_model2}
	\end{small}
\end{algorithm}
}
\end{tabular}
\caption{The pseudocode for the two models for $N^{th}$ order polynomial approximation.}
\end{table} 

 \section{Proofs}
\label{sec:polygan_theoretical_suppl}

For a set of matrices  $\{\bm{X}_m \in \mathbb{R}^{I_m \times N} \}_{m=1}^N$ the Khatri-Rao product  is denoted by:
\begin{equation}
    \bm{X}_1 \odot \bm{X}_2 \odot  \cdots \odot  \bm{X}_M \doteq  \bigodot_{m=1}^M \bm{X}_m
\end{equation}

In this section, we prove the following identity connecting the sets of matrices $\{\bm{A}_{\nu} \mathbb{R}^{I_{\nu} \times K} \}_{\nu=1}^N$  and $\{\bm{B}_{\nu} \mathbb{R}^{I_{\nu} \times L} \}_{\nu=1}^N$: 

\begin{equation}
    (\bigodot_{\nu=1}^N \bm{A}_{\nu})^T \cdot (\bigodot_{\nu=1}^N \bm{B}_{\nu}) = (\bm{A}_1^T \cdot \bm{B}_1) * (\bm{A}_2^T \cdot \bm{B}_2) * \ldots * (\bm{A}_N^T \cdot \bm{B}_N)
\end{equation}

To demonstrate the simple case with two matrices, we prove first the special case with $N=2$.
\begin{lemma}
It holds that
\begin{equation}
(\bm{A}_1 \odot \bm{A}_2)^T \cdot (\bm{B}_1 \odot \bm{B}_2) = (\bm{A}_1^T \cdot \bm{B}_1) * (\bm{A}_2^T \cdot \bm{B}_2)
\label{eq:power_gan_suppl_lemma1_n2}
\end{equation}
\label{lemma:power_gan_lemma_hadamard_kr}
\end{lemma}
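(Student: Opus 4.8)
The plan is to verify the identity entrywise, exploiting the column-wise Kronecker structure of the Khatri--Rao product together with the mixed-product property of the Kronecker product. First I would fix indices $i \in \{1,\dots,K\}$ and $j \in \{1,\dots,L\}$ and recall that, by definition of $\odot$, the $i$-th column of $\bm{A}_1 \odot \bm{A}_2$ is the Kronecker product $\bm{a}_1^{(i)} \otimes \bm{a}_2^{(i)}$ of the $i$-th columns of $\bm{A}_1$ and $\bm{A}_2$, and likewise the $j$-th column of $\bm{B}_1 \odot \bm{B}_2$ is $\bm{b}_1^{(j)} \otimes \bm{b}_2^{(j)}$. Consequently the $(i,j)$ entry of the left-hand side is the inner product $\big(\bm{a}_1^{(i)} \otimes \bm{a}_2^{(i)}\big)^T \big(\bm{b}_1^{(j)} \otimes \bm{b}_2^{(j)}\big)$, since a matrix product reads off as inner products of columns of the transposed factor against columns of the second factor.

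The key step is then the mixed-product property $(\bm{P} \otimes \bm{Q})(\bm{R} \otimes \bm{S}) = (\bm{P}\bm{R}) \otimes (\bm{Q}\bm{S})$, applied with the row vectors $\bm{a}_1^{(i)T}, \bm{a}_2^{(i)T}$ and the column vectors $\bm{b}_1^{(j)}, \bm{b}_2^{(j)}$. Using $(\bm{u}\otimes\bm{v})^T = \bm{u}^T \otimes \bm{v}^T$, this yields
\[
\big(\bm{a}_1^{(i)} \otimes \bm{a}_2^{(i)}\big)^T \big(\bm{b}_1^{(j)} \otimes \bm{b}_2^{(j)}\big) = \big(\bm{a}_1^{(i)T}\bm{b}_1^{(j)}\big) \otimes \big(\bm{a}_2^{(i)T}\bm{b}_2^{(j)}\big),
\]
where both factors are scalars, so the Kronecker product degenerates to ordinary multiplication. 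I then recognise $\bm{a}_1^{(i)T}\bm{b}_1^{(j)} = (\bm{A}_1^T \bm{B}_1)_{i,j}$ and $\bm{a}_2^{(i)T}\bm{b}_2^{(j)} = (\bm{A}_2^T \bm{B}_2)_{i,j}$, so the $(i,j)$ entry of the left-hand side equals $(\bm{A}_1^T\bm{B}_1)_{i,j}\,(\bm{A}_2^T\bm{B}_2)_{i,j}$, which is precisely the $(i,j)$ entry of the Hadamard product $(\bm{A}_1^T\bm{B}_1)*(\bm{A}_2^T\bm{B}_2)$. Since $i,j$ were arbitrary, the two matrices agree in every entry.

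There is no serious obstacle here; the only care needed is bookkeeping --- confirming that the dimensions line up (the left side is $K \times L$, matching both $\bm{A}_1^T\bm{B}_1$ and $\bm{A}_2^T\bm{B}_2$) and that the Khatri--Rao columns are single Kronecker products of columns rather than the full Kronecker product of the matrices. A fully elementary alternative, which I would keep in reserve in case one prefers to avoid the Kronecker identity, is to expand everything with explicit scalar indices: writing $(\bm{A}_1 \odot \bm{A}_2)_{(p-1)I_2 + q,\, i} = (\bm{A}_1)_{p,i}(\bm{A}_2)_{q,i}$ and summing over $p,q$ factors the double sum as $\big(\sum_p (\bm{A}_1)_{p,i}(\bm{B}_1)_{p,j}\big)\big(\sum_q (\bm{A}_2)_{q,i}(\bm{B}_2)_{q,j}\big)$, giving the same conclusion directly. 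This entrywise computation is moreover the natural template for the induction on $N$ used to establish the general identity stated immediately before the lemma.
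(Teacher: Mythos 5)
Your proof is correct, and your primary route is genuinely different from the paper's. The paper proves the lemma by brute-force index bookkeeping: it writes out the $(i,j)$ entry of the right-hand side as a product of two sums, expands it into a double sum over $k_1,k_2$, identifies the $\bigl((k_1-1)I_2+k_2,\,i\bigr)$ entry of $\bm{A}_1\odot\bm{A}_2$ as $A_{1,(k_1,i)}A_{2,(k_2,i)}$ (and similarly for $\bm{B}_1\odot\bm{B}_2$), and then re-indexes the single sum over $\rho$ on the left-hand side as the same double sum. Your main argument instead treats the $(i,j)$ entry of the left-hand side as the inner product of two Kronecker products of columns and dispatches it with the mixed-product property $(\bm{u}\otimes\bm{v})^T(\bm{x}\otimes\bm{y})=(\bm{u}^T\bm{x})(\bm{v}^T\bm{y})$. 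This buys a cleaner, coordinate-free one-liner that also scales to the $N$-factor version (Lemma~2 of the paper) without the nested-sum notation the paper has to wrestle with there; what it costs is reliance on an external identity, whereas the paper's computation is self-contained. Your ``fully elementary alternative'' is essentially verbatim the paper's proof, so you have both routes covered. One minor aside: the paper proves the general $N$-factor identity by a direct $N$-fold index expansion rather than by the induction on $N$ you allude to at the end, though induction would of course also work.
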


\begin{proof}
Initially, both sides of the equation have dimensions of $K\times L$, i.e., they match. 
The $(i,j)$ element of the matrix product of $(\bm{A}_1^T \cdot \bm{B}_1)$ is 

\begin{equation}
    \sum_{k_1=1}^{I_1} A_{1, (k_1, i)} B_{1, (k_1, j)}
\end{equation}

Then the $(i, j)$ element of the right hand side (rhs) of (\ref{eq:power_gan_suppl_lemma1_n2}) is:
\begin{equation}
\begin{split}
E_{rhs} = (\sum_{k_1=1}^{I_1} A_{1, (k_1, i)} B_{1, (k_1, j)}) \cdot (\sum_{k_2=1}^{I_2} A_{2, (k_2, i)} B_{2, (k_2, j)}) = \\
\sum_{k_1=1}^{I_1} \sum_{k_2=1}^{I_2} (A_{1, (k_1, i)} A_{2, (k_2, i)}) ( B_{(1, k_1, j)}  B_{2, (k_2, j)} )
\end{split}
\end{equation}
From the definition of Khatri-Rao, it is straightforward to obtain the $(\rho, i)$ element with $\rho = (k_1-1) I_2 + k_2$, (i.e. $\rho \in [1, I_1 I_2]$) of $\bm{A}_1 \odot \bm{A}_2$ as $A_{1, (k_1, i)} A_{2, (k_2, i)}$. Similarly, the $(\rho, j)$ element of $\bm{B}_1 \odot \bm{B}_2$ is $B_{1, (k_1, j)} B_{2, (k_2, j)}$.

The respective $(i, j)$ element of the left hand side (lhs) of (\ref{eq:power_gan_suppl_lemma1_n2}) is:
\begin{equation}
\begin{split}
    E_{lhs} = \sum_{\rho=1}^{I_1 I_2} A_{1, (k_1, i)} A_{2, (k_2, i)} B_{1, (k_1, j)} B_{2, (k_2, j)} = \\
    \sum_{k_1=1}^{I_1} \sum_{k_2=1}^{I_2} A_{1, (k_1, i)} A_{2, (k_2, i)} B_{1, (k_1, j)} B_{2, (k_2, j)} = E_{rhs}
\end{split}
\end{equation}

In the last equation, we replace the sum in $\rho$ ($\rho \in [1, I_1 I_2]$) with the equivalent sums in $k_1, k_2$.
\end{proof}

In a similar manner, we generalize the identity to the case of $N>2$ terms below.

\begin{lemma}
It holds that
\begin{equation}
    (\bigodot_{\nu=1}^N \bm{A}_{\nu})^T \cdot (\bigodot_{\nu=1}^N \bm{B}_{\nu}) = (\bm{A}_1^T \cdot \bm{B}_1) * (\bm{A}_2^T \cdot \bm{B}_2) * \ldots * (\bm{A}_N^T \cdot \bm{B}_N)
\label{eq:polygan_suppl_lemma1_N}
\end{equation}
\label{lemma:polygan_lemma_hadamard_kr2}
\end{lemma}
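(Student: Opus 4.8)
The plan is to prove (\ref{eq:polygan_suppl_lemma1_N}) by induction on $N$, taking Lemma~\ref{lemma:power_gan_lemma_hadamard_kr} (the case $N=2$) as the base case. Two structural facts make the induction go through cleanly: the associativity of the Khatri--Rao product, and the associativity of the Hadamard product. The latter is immediate, since $*$ acts entrywise and every factor appearing on the right-hand side is a matrix of the same size $K \times L$.

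For the inductive step I would assume the identity holds for any $N-1$ matrices and first peel off the last factor using associativity of $\odot$, writing $\bigodot_{\nu=1}^N \bm{A}_{\nu} = \big(\bigodot_{\nu=1}^{N-1}\bm{A}_{\nu}\big) \odot \bm{A}_N$ and likewise for the $\bm{B}_{\nu}$. Since $\bigodot_{\nu=1}^{N-1}\bm{A}_{\nu} \in \realnum^{(\prod_{\nu=1}^{N-1} I_{\nu}) \times K}$ and $\bm{A}_N \in \realnum^{I_N \times K}$ share the same number of columns, this outer Khatri--Rao product is well defined and the two-factor lemma applies to it.

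The next step is to apply Lemma~\ref{lemma:power_gan_lemma_hadamard_kr} to this decomposition, with $\bigodot_{\nu=1}^{N-1}\bm{A}_{\nu}$ and $\bigodot_{\nu=1}^{N-1}\bm{B}_{\nu}$ in the role of the first pair and $\bm{A}_N, \bm{B}_N$ in the role of the second, obtaining
\[
\Big(\bigodot_{\nu=1}^N \bm{A}_{\nu}\Big)^T \cdot \Big(\bigodot_{\nu=1}^N \bm{B}_{\nu}\Big) = \Big[\Big(\bigodot_{\nu=1}^{N-1}\bm{A}_{\nu}\Big)^T \cdot \Big(\bigodot_{\nu=1}^{N-1}\bm{B}_{\nu}\Big)\Big] * \big(\bm{A}_N^T \cdot \bm{B}_N\big).
\]
I would then invoke the inductive hypothesis on the bracketed factor to rewrite it as $(\bm{A}_1^T\cdot\bm{B}_1) * \cdots * (\bm{A}_{N-1}^T\cdot\bm{B}_{N-1})$, and finally use associativity of $*$ to append the trailing $(\bm{A}_N^T\cdot\bm{B}_N)$ to the chain, which is exactly the right-hand side of (\ref{eq:polygan_suppl_lemma1_N}).

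I expect the only real obstacle to be bookkeeping rather than anything conceptual: one must confirm that the dimensions align at each stage and, in particular, that the composite row index of $\bigodot_{\nu=1}^{N-1}\bm{A}_{\nu}$ produced by the definition of the Khatri--Rao product is consistent with splitting off $\bm{A}_N$ on the right, so that associativity of $\odot$ is genuinely being used correctly. Should one prefer to avoid appealing to associativity of $\odot$, an alternative is to mirror the entrywise computation of Lemma~\ref{lemma:power_gan_lemma_hadamard_kr} directly: express the $(i,j)$ entry of the left-hand side as a single sum over the composite index $\rho \leftrightarrow (k_1,\dots,k_N)$, observe that the summand factors as $\prod_{\nu=1}^N A_{\nu,(k_{\nu},i)} B_{\nu,(k_{\nu},j)}$, and split the sum over $\rho$ into nested sums over $k_1,\dots,k_N$ that factorize into the product of the $N$ entrywise sums defining the right-hand side.
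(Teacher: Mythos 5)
Your proof is correct, but your primary argument takes a genuinely different route from the paper's. The paper proves Lemma~\ref{lemma:polygan_lemma_hadamard_kr2} by directly generalizing the entrywise computation of Lemma~\ref{lemma:power_gan_lemma_hadamard_kr}: it writes the $(i,j)$ entry of each side as an $N$-fold nested sum over $k_1,\dots,k_N$ and matches the summands term by term --- which is precisely the alternative you sketch in your final sentences. Your main argument instead inducts on $N$, peeling off the last factor via $\bigodot_{\nu=1}^{N}\bm{A}_{\nu} = \big(\bigodot_{\nu=1}^{N-1}\bm{A}_{\nu}\big)\odot\bm{A}_{N}$ and applying the two-factor lemma with $\bigodot_{\nu=1}^{N-1}\bm{A}_{\nu}$ and $\bigodot_{\nu=1}^{N-1}\bm{B}_{\nu}$ as the first pair. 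This is valid: the Khatri--Rao product is associative because its columns are Kronecker products of columns and the Kronecker product is associative, so the composite row index obtained by grouping the first $N-1$ factors is consistent with the flat $N$-factor convention; and the Hadamard product of same-size $K\times L$ matrices is trivially associative. The inductive route is more modular --- it reuses Lemma~\ref{lemma:power_gan_lemma_hadamard_kr} as a black box and sidesteps the composite-index bookkeeping that the paper's proof handles somewhat informally (the recursive index $s_{\nu}$ is introduced but never fully pinned down). The paper's direct computation, in exchange, is self-contained and exhibits the correspondence $\rho \leftrightarrow (k_1,\dots,k_N)$ explicitly in a single pass. Either argument establishes the identity.
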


\begin{proof}
The rhs includes the Hadamard products of the matrices $\bm{A}_{\nu}^T \cdot \bm{B}_{\nu}$. Each matrix multiplication ($\bm{A}_{\nu}^T \cdot \bm{B}_{\nu}$) results in a matrix of $K \times L$ dimensions. Thus, the rhs is a matrix of $K \times L$ dimensions. 

The lhs is a matrix multiplication of two Khatri-Rao products. The first Khatri-Rao product has dimensions $K \times (\prod_{\nu} I_{\nu})$, while the second $(\prod_{\nu} I_{\nu}) \times L$. Altogether, the lhs has $K \times L$ dimensions.

Similarly to the previous Lemma, the $(i, j)$ element of the rhs is:
\begin{equation}
\begin{split}
    E_{rhs} = (\sum_{k_1=1}^{I_1} A_{1, (k_1, i)} B_{1, (k_1, j)}) \cdot (\sum_{k_2=1}^{I_2} A_{2, (k_2, i)} B_{2, (k_2, j)}) \ldots   (\sum_{k_N=1}^{I_N} A_{N, (k_N, i)} B_{N, (k_N, j)}) = \\
    \sum_{k_1=1}^{I_1} \sum_{k_2=1}^{I_2} \ldots \sum_{k_N=1}^{I_N} (A_{1, (k_1, i)} A_{2, (k_2, i)} \ldots A_{N, (k_N, i)}) ( B_{1, (k_1, j)}  B_{2, (k_2, j)} \ldots B_{N, (k_N, j)})
\end{split}
\end{equation}

To proceed with the lhs, it is straightforward to derive that 
\begin{equation}
    (\bigodot_{\nu=1}^N \bm{A}_{\nu}) = A_{1, (s_1, i)} A_{2, (s_2, i)} \ldots A_{N, (s_N, i)}
\end{equation}

where $s_1 = i$ and $s_{\nu}$ is a recursive function of the $s_{\nu-1}$. 

However, the recursive definition of $s_{\nu}$ is summed in the multiplication and we obtain: 
\begin{equation}
    E_{lhs} = \sum_{k_1=1}^{I_1} \sum_{k_2=1}^{I_2} \ldots \sum_{k_N=1}^{I_N} (A_{1, (k_1, i)} A_{2, (k_2, i)} \ldots A_{N, (k_N, i)}) ( B_{1, (k_1, j)}  B_{2, (k_2, j)} \ldots B_{N, (k_N, j)}) = E_{rhs}
\end{equation}
\end{proof}

\subsection{Proofs for model 1}
\label{ssec:polygan_proofs_lemma_additive_cp_model}

Below, we prove that (\ref{eq:polygan_recursive_gen_third_order}) (main paper) is equivalent to the three-layer neural network as shown in Figure~\ref{fig:polygan_model1_schematic}.

\begin{claim}
Let $\bm{\omega} = \Big(\bm{U}\matnot{2}^T \bm{z} \Big) * \Big(\bm{U}\matnot{1}^T \bm{z} \Big) + \bm{U}\matnot{1}^T \bm{z}$.

    Then, the form of (\ref{eq:polygan_recursive_gen_third_order}) is equal to:
    \begin{equation}
        G(\bm{z}) = \bm{\beta} + \bm{C}\bigg\{\Big(\bm{U}\matnot{3}^T \bm{z}\Big) * \bm{\omega} + \bm{\omega}\bigg\}
        \label{eq:polygan_lemma_additive_cp_model_hadamard_form1}
    \end{equation}
  \label{lemma:polygan_additive_cp_model_hadamard_third_order}
\end{claim}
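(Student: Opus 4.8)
The plan is to reduce (\ref{eq:polygan_recursive_gen_third_order}) to a pure Hadamard-product form by invoking Lemma~\ref{lemma:polygan_lemma_hadamard_kr2}, and then to match the result against (\ref{eq:polygan_lemma_additive_cp_model_hadamard_form1}) by a direct expansion of the claimed right-hand side. First I would observe that each higher-order summand in (\ref{eq:polygan_recursive_gen_third_order}) has the shape $\big(\bigodot_\nu \bm{U}\matnot{m_\nu}\big)^T(\bm{z}\odot\cdots\odot\bm{z})$, i.e. a Khatri-Rao product of factor matrices contracted against a repeated Khatri-Rao product of the column vector $\bm{z}$. Treating $\bm{z}$ as a single-column matrix, Lemma~\ref{lemma:polygan_lemma_hadamard_kr2} applies with the $\bm{A}_\nu$ taken to be the factor matrices $\bm{U}\matnot{m_\nu}$ and every $\bm{B}_\nu = \bm{z}$, so each such term collapses to a Hadamard product of the mode-vector products $\bm{U}\matnot{m}^T\bm{z}$. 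For instance,
\begin{equation}
\big(\bm{U}\matnot{3} \odot \bm{U}\matnot{1}\big)^T(\bm{z}\odot\bm{z}) = \big(\bm{U}\matnot{3}^T\bm{z}\big) * \big(\bm{U}\matnot{1}^T\bm{z}\big),
\end{equation}
and analogously for the $\big(\bm{U}\matnot{2}\odot\bm{U}\matnot{1}\big)$ term and the triple $\big(\bm{U}\matnot{3}\odot\bm{U}\matnot{2}\odot\bm{U}\matnot{1}\big)$ term.

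Next, after substituting these three identities into (\ref{eq:polygan_recursive_gen_third_order}) and factoring out the common left factor $\bm{C}$, the expression becomes $G(\bm{z}) = \bm{\beta} + \bm{C}\big\{\bm{a} + (\bm{c}*\bm{a}) + (\bm{b}*\bm{a}) + (\bm{c}*\bm{b}*\bm{a})\big\}$, where I abbreviate $\bm{a}=\bm{U}\matnot{1}^T\bm{z}$, $\bm{b}=\bm{U}\matnot{2}^T\bm{z}$, $\bm{c}=\bm{U}\matnot{3}^T\bm{z}$. I would then expand the target form of (\ref{eq:polygan_lemma_additive_cp_model_hadamard_form1}): inserting $\bm{\omega}=(\bm{b}*\bm{a})+\bm{a}$ and using commutativity, associativity, and distributivity of the Hadamard product over vector addition gives $(\bm{c}*\bm{\omega})+\bm{\omega} = (\bm{c}*\bm{b}*\bm{a}) + (\bm{c}*\bm{a}) + (\bm{b}*\bm{a}) + \bm{a}$, which is term-for-term identical to the bracketed expression above. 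Since the leading $\bm{\beta}+\bm{C}\bm{a}$ (the first-order term) is already in both forms, the two coincide and the claim follows.

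The only step requiring genuine care is the application of Lemma~\ref{lemma:polygan_lemma_hadamard_kr2}: one must confirm that the repeated Khatri-Rao product $\bm{z}\odot\cdots\odot\bm{z}$ of the single column vector $\bm{z}$ is exactly the right factor the lemma expects, which holds because for a one-column matrix the Khatri-Rao product reduces to the ordinary Kronecker product, so the contraction indeed collapses to the Hadamard product of the individual $\bm{U}\matnot{m}^T\bm{z}$. Everything after that is purely mechanical bookkeeping of four Hadamard monomials relying only on the algebraic properties of $*$, and no further structural argument is needed.
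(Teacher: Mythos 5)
Your proof is correct and follows essentially the same route as the paper: both apply Lemma~\ref{lemma:polygan_lemma_hadamard_kr2} (with the $\bm{B}_\nu$ taken to be the single-column $\bm{z}$) to collapse each Khatri-Rao term of (\ref{eq:polygan_recursive_gen_third_order}) into a Hadamard product of the $\bm{U}\matnot{m}^T\bm{z}$, and then match the four resulting monomials against the expansion of $(\bm{U}\matnot{3}^T\bm{z})*\bm{\omega}+\bm{\omega}$. The only cosmetic difference is direction — the paper factors the expanded sum into the nested form, while you expand the nested form and compare term by term.
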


\begin{proof}
Applying Lemma~\ref{lemma:polygan_lemma_hadamard_kr2} on (\ref{eq:polygan_recursive_gen_third_order}), we obtain:
\begin{equation}
\begin{split}
    G(\bm{z}) = \bm{\beta} + \bm{C}\bigg\{ \bm{U}\matnot{1}^T\bm{z} + \Big(\bm{U}\matnot{3}^T \bm{z}\Big) * \Big(\bm{U}\matnot{1}^T \bm{z}\Big) + \Big(\bm{U}\matnot{2}^T \bm{z} \Big) * \Big( \bm{U}\matnot{1}^T \bm{z} \Big) + \\
    \Big(\bm{U}\matnot{3}^T \bm{z}\Big) * \Big(\bm{U}\matnot{2}^T \bm{z} \Big) * \Big(\bm{U}\matnot{1}^T \bm{z}\Big) \bigg\} = \\
    \bm{\beta} + \bm{C}\bigg\{ \Big(\bm{U}\matnot{3}^T \bm{z}\Big) * \Big[\Big(\bm{U}\matnot{2}^T \bm{z} \Big) * \Big(\bm{U}\matnot{1}^T \bm{z} \Big) + \bm{U}\matnot{1}^T \bm{z} \Big] + \Big(\bm{U}\matnot{2}^T \bm{z} \Big) * \Big(\bm{U}\matnot{1}^T \bm{z} \Big) + \bm{U}\matnot{1}^T \bm{z}\bigg\}
\end{split}
\end{equation}

The last equation is the same as (\ref{eq:polygan_lemma_additive_cp_model_hadamard_form1}).
\end{proof}

\subsection{Proofs for model 2}
\label{ssec:polygan_proofs_lemma_model_hierarchical}

In Claim~\ref{lemma:polygan_lemma_third_order_x2_term} and Claim~\ref{lemma:polygan_lemma_third_order}, we prove that (\ref{eq:polygan_third_order_decomp_init}) (main paper) is equivalent to the three-layer neural network as shown in Figure~\ref{fig:polygan_core_module}.

\begin{claim}
Let 
\begin{equation}
    \bm{\omega} = \bigg((\bm{A}\matnot{2})^T \bm{z} \bigg) * \bigg\{ (\bm{B}\matnot{2})^T \bm{b}\matnot{2} + (\bm{S}\matnot{2})^T \bigg[ \bigg( (\bm{A}\matnot{1})^T \bm{z} \bigg) * \bigg( (\bm{B}\matnot{1})^T \bm{b}\matnot{1} \bigg) \bigg] \bigg\}
    \label{eq:polygan_lemma_x2_hadamard_form}
\end{equation}

It holds that 
\begin{equation}
    \bm{\omega} = \bigg\{\bm{A}\matnot{2} \odot \Big[\Big(\bm{A}\matnot{1} \odot \bm{B}\matnot{1}\Big) \bm{S}\matnot{2}\Big] \bigg\}^T \Big(\bm{z} \odot \bm{z} \odot \bm{b}\matnot{1}\Big) + \Big(\bm{A}\matnot{2} \odot \bm{B}\matnot{2}\Big)^T \Big(\bm{z} \odot \bm{b}\matnot{2} \Big)
    \label{eq:polygan_lemma_x2_kr_form}
\end{equation}
\label{lemma:polygan_lemma_third_order_x2_term}
\end{claim}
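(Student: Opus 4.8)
The plan is to start from \eqref{eq:polygan_lemma_x2_hadamard_form} and observe that $\bm{\omega}$ is the Hadamard product of $(\bm{A}\matnot{2})^T\bm{z}$ with a \emph{sum} of two vectors. Distributing the Hadamard product over that sum splits $\bm{\omega}$ into exactly two pieces, which I will match one-to-one with the two summands of \eqref{eq:polygan_lemma_x2_kr_form}. The single tool needed for each piece is the Khatri--Rao/Hadamard identity of Lemma~\ref{lemma:power_gan_lemma_hadamard_kr} (the $N=2$ case), applied with vectors regarded as single-column matrices so that the Khatri--Rao product reduces to the ordinary Kronecker product of the columns.

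For the first piece, $\big((\bm{A}\matnot{2})^T\bm{z}\big) * \big((\bm{B}\matnot{2})^T\bm{b}\matnot{2}\big)$, I would apply Lemma~\ref{lemma:power_gan_lemma_hadamard_kr} directly under the substitution $\bm{A}_1 \gets \bm{A}\matnot{2}$, $\bm{B}_1 \gets \bm{z}$, $\bm{A}_2 \gets \bm{B}\matnot{2}$, $\bm{B}_2 \gets \bm{b}\matnot{2}$. Read right-to-left, the identity turns this Hadamard product into $\big(\bm{A}\matnot{2} \odot \bm{B}\matnot{2}\big)^T \big(\bm{z} \odot \bm{b}\matnot{2}\big)$, which is precisely the second summand of \eqref{eq:polygan_lemma_x2_kr_form}.

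For the second piece I would proceed in two applications of the same identity. First, inside the brace, $\big((\bm{A}\matnot{1})^T\bm{z}\big) * \big((\bm{B}\matnot{1})^T\bm{b}\matnot{1}\big)$ collapses, again by Lemma~\ref{lemma:power_gan_lemma_hadamard_kr}, into $\big(\bm{A}\matnot{1} \odot \bm{B}\matnot{1}\big)^T(\bm{z}\odot\bm{b}\matnot{1})$. Absorbing the left multiplication by $(\bm{S}\matnot{2})^T$ and using $(\bm{S}\matnot{2})^T \bm{M}^T = \big(\bm{M}\,\bm{S}\matnot{2}\big)^T$ with $\bm{M} = \bm{A}\matnot{1}\odot\bm{B}\matnot{1}$, the inner expression becomes $\big[(\bm{A}\matnot{1}\odot\bm{B}\matnot{1})\bm{S}\matnot{2}\big]^T(\bm{z}\odot\bm{b}\matnot{1})$. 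The outer Hadamard product with $(\bm{A}\matnot{2})^T\bm{z}$ is then resolved by a second use of Lemma~\ref{lemma:power_gan_lemma_hadamard_kr}, giving $\Big(\bm{A}\matnot{2}\odot\big[(\bm{A}\matnot{1}\odot\bm{B}\matnot{1})\bm{S}\matnot{2}\big]\Big)^T\big(\bm{z}\odot(\bm{z}\odot\bm{b}\matnot{1})\big)$. Finally, the associativity of the Khatri--Rao product lets me rewrite $\bm{z}\odot(\bm{z}\odot\bm{b}\matnot{1}) = \bm{z}\odot\bm{z}\odot\bm{b}\matnot{1}$, recovering the first summand of \eqref{eq:polygan_lemma_x2_kr_form}. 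Summing the two pieces completes the argument.

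The calculation is routine once the identity is in hand, so the only real care points — and the step I expect to be the main obstacle — are bookkeeping ones: ensuring the dimensions line up when vectors are treated as one-column matrices (so that $\odot$ acting on columns is consistent with the matrix statement of the lemma), correctly commuting $(\bm{S}\matnot{2})^T$ past the transpose, and invoking associativity of the Khatri--Rao product (standard, though not proved in the excerpt) to fuse the three-fold product $\bm{z}\odot\bm{z}\odot\bm{b}\matnot{1}$.
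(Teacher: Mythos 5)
Your proposal is correct and follows essentially the same route as the paper's proof: distribute the Hadamard product over the sum, apply Lemma~\ref{lemma:power_gan_lemma_hadamard_kr} once to the first piece and to the inner product of the second piece (absorbing $(\bm{S}\matnot{2})^T$ into the transpose), and then apply the lemma a second time to the outer Hadamard product. The paper performs exactly these two applications of the $N=2$ identity, so no substantive difference remains.
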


\begin{proof}
We will prove the equivalence starting from (\ref{eq:polygan_lemma_x2_hadamard_form}) and transform it into (\ref{eq:polygan_lemma_x2_kr_form}). From (\ref{eq:polygan_lemma_x2_hadamard_form}):

\begin{equation}
\begin{split}
    \bm{\omega} =  \Big((\bm{A}\matnot{2})^T \bm{z} \Big) * \Big((\bm{B}\matnot{2})^T \bm{b}\matnot{2}\Big) + \Big((\bm{A}\matnot{2})^T \bm{z} \Big) * \bigg\{ (\bm{S}\matnot{2})^T \bigg[ \bigg( (\bm{A}\matnot{1})^T \bm{z} \bigg) * \bigg( (\bm{B}\matnot{1})^T \bm{b}\matnot{1} \bigg) \bigg] \bigg\} = \\
    \Big(\bm{A}\matnot{2} \odot \bm{B}\matnot{2}\Big)^T \Big(\bm{z} \odot \bm{b}\matnot{2} \Big) + \Big((\bm{A}\matnot{2})^T \bm{z} \Big) * \bigg\{ \bigg[ \bigg( \bm{A}\matnot{1} \odot  \bm{B}\matnot{1}\bigg) \bm{S}\matnot{2}  \bigg]^T \Big(\bm{z} \odot \bm{b}\matnot{1}\Big) \bigg\}
\end{split}
\label{eq:polygan_lemma_x2_proof_part1}
\end{equation}

where in the last equation, we have applied Lemma~\ref{lemma:power_gan_lemma_hadamard_kr}. Applying the Lemma once more in the last term of (\ref{eq:polygan_lemma_x2_proof_part1}), we obtain (\ref{eq:polygan_lemma_x2_kr_form}).
\end{proof}

\begin{claim}
Let
\begin{equation}
\bm{\lambda} = \bm{\beta} + \bm{C} \bigg\{ \bigg((\bm{A}\matnot{3})^T \bm{z}\bigg) * \bigg[ (\bm{B}\matnot{3})^T \bm{b}\matnot{3} + (\bm{S}\matnot{3})^T \bm{\omega} \bigg] \bigg\}
\label{eq:polygan_lemma_x3_hadamard_form}
\end{equation}

with $\bm{\omega}$ as in Claim~\ref{lemma:polygan_lemma_third_order_x2_term}.  Then, it holds for $G(\bm{z})$ of (\ref{eq:polygan_third_order_decomp_init}) that $G(\bm{z}) = \bm{\lambda}$.

\label{lemma:polygan_lemma_third_order}
\end{claim}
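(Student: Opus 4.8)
The plan is to begin from the Hadamard-product expression (\ref{eq:polygan_lemma_x3_hadamard_form}) defining $\bm{\lambda}$ and reduce it, term by term, to the Khatri-Rao expression (\ref{eq:polygan_third_order_decomp_init}) defining $G(\bm{z})$. The only machinery required is distributivity of the Hadamard product over addition, the transpose identity $(\bm{S}\matnot{3})^T\bm{M}^T = (\bm{M}\bm{S}\matnot{3})^T$, and repeated use of Lemma~\ref{lemma:power_gan_lemma_hadamard_kr}, read from right to left so that a Hadamard product of two matrix-vector products collapses into a single Khatri-Rao factor acting on the $\odot$-concatenated argument vector. A key preliminary move is to substitute for $\bm{\omega}$ not its Hadamard form but its already-proved Khatri-Rao form (\ref{eq:polygan_lemma_x2_kr_form}) from Claim~\ref{lemma:polygan_lemma_third_order_x2_term}, since that is the shape matching the target.

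First I would distribute the outer product $\big((\bm{A}\matnot{3})^T\bm{z}\big)*[\,\cdot\,]$ across the bracket $(\bm{B}\matnot{3})^T\bm{b}\matnot{3} + (\bm{S}\matnot{3})^T\bm{\omega}$, splitting $\bm{\lambda}-\bm{\beta}$ into a $\bm{B}\matnot{3}$-contribution and an $\bm{\omega}$-contribution. For the former, one application of Lemma~\ref{lemma:power_gan_lemma_hadamard_kr} with the pairs $(\bm{A}\matnot{3},\bm{z})$ and $(\bm{B}\matnot{3},\bm{b}\matnot{3})$ turns $\big((\bm{A}\matnot{3})^T\bm{z}\big)*\big((\bm{B}\matnot{3})^T\bm{b}\matnot{3}\big)$ into $(\bm{A}\matnot{3}\odot\bm{B}\matnot{3})^T(\bm{z}\odot\bm{b}\matnot{3})$, i.e.\ exactly the first-order term of (\ref{eq:polygan_third_order_decomp_init}).

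For the $\bm{\omega}$-contribution I would insert the two summands of (\ref{eq:polygan_lemma_x2_kr_form}) and push $(\bm{S}\matnot{3})^T$ through each transpose via $(\bm{S}\matnot{3})^T\bm{M}^T=(\bm{M}\bm{S}\matnot{3})^T$, thereby absorbing $\bm{S}\matnot{3}$ into the factor matrices. Each summand is then a single matrix-vector product Hadamard-multiplied against $(\bm{A}\matnot{3})^T\bm{z}$, and a final use of Lemma~\ref{lemma:power_gan_lemma_hadamard_kr} merges the $\bm{A}\matnot{3}$ factor into the Khatri-Rao product while appending a further $\bm{z}$ to the argument vector. The summand carrying $\bm{z}\odot\bm{b}\matnot{2}$ yields the second-order term and the one carrying $\bm{z}\odot\bm{z}\odot\bm{b}\matnot{1}$ yields the third-order term of (\ref{eq:polygan_third_order_decomp_init}); adding the constant $\bm{\beta}$ and the three recovered terms gives $G(\bm{z})=\bm{\lambda}$.

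I expect the only real difficulty to be bookkeeping rather than ideas: keeping the nested transposes straight and verifying that, once $\bm{S}\matnot{3}$ is moved inside, it multiplies the correct composite factor in the correct column order, and that the conformability dictated by the dimensions ($\bm{A}\matnot{n}\in\realnum^{d\times k}$, $\bm{S}\matnot{n}\in\realnum^{k\times k}$, $\bm{B}\matnot{n}\in\realnum^{\omega\times k}$, $\bm{C}\in\realnum^{o\times k}$) is respected at every Khatri-Rao collapse. Given Claim~\ref{lemma:polygan_lemma_third_order_x2_term}, no step is conceptually hard.
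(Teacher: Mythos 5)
Your proposal is correct and follows essentially the same route as the paper's proof: distribute the Hadamard product, collapse the $\bm{B}\matnot{3}$ term via Lemma~\ref{lemma:power_gan_lemma_hadamard_kr}, substitute the Khatri-Rao form of $\bm{\omega}$ from Claim~\ref{lemma:polygan_lemma_third_order_x2_term}, absorb $(\bm{S}\matnot{3})^T$ into the factor matrices, and apply the lemma once more to recover the second- and third-order terms of (\ref{eq:polygan_third_order_decomp_init}). The paper merely abbreviates the two composite factors as $\bm{M}_1$ and $\bm{M}_2$ for bookkeeping, which is exactly the bookkeeping concern you anticipated.
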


\begin{proof}
Transforming (\ref{eq:polygan_lemma_x3_hadamard_form}) into (\ref{eq:polygan_third_order_decomp_init}):

\begin{equation}
\begin{split}
    \bm{\lambda} = \bm{\beta} + \bm{C} \bigg\{ \bigg((\bm{A}\matnot{3})^T \bm{z}\bigg) * \bigg( (\bm{B}\matnot{3})^T \bm{b}\matnot{3} \bigg) + \bigg((\bm{A}\matnot{3})^T \bm{z}\bigg) * \bigg( (\bm{S}\matnot{3})^T \bm{\omega} \bigg) \bigg\} = \\
    \bm{\beta} + \bm{C} \bigg\{  \bigg(\bm{A}\matnot{3} \odot \bm{B}\matnot{3} \bigg)^T \bigg(\bm{z} \odot \bm{b}\matnot{3}\bigg) + \bigg((\bm{A}\matnot{3})^T \bm{z}\bigg) * \bigg( (\bm{S}\matnot{3})^T \bm{\omega} \bigg) \bigg\}
\end{split}
\label{eq:polygan_lemma_x3_proof_part1}
\end{equation}

To simplify the notation, we define $\bm{M}_1 = \bigg\{\bm{A}\matnot{2} \odot \Big[\Big(\bm{A}\matnot{1} \odot \bm{B}\matnot{1}\Big) \bm{S}\matnot{2}\Big] \bigg\}$ and $\bm{M}_2 = \Big(\bm{A}\matnot{2} \odot \bm{B}\matnot{2}\Big)$. Then, $\bm{\omega} = \bm{M}_1^T \Big(\bm{z} \odot \bm{z} \odot \bm{b}\matnot{1}\Big) + \bm{M}_2^T \Big(\bm{z} \odot \bm{b}\matnot{2} \Big)$. The last term of (\ref{eq:polygan_lemma_x3_proof_part1}) becomes:

\begin{equation}
\begin{split}
    \bigg((\bm{A}\matnot{3})^T \bm{z}\bigg) * \bigg( (\bm{S}\matnot{3})^T \bm{\omega} \bigg) = \bigg((\bm{A}\matnot{3})^T \bm{z}\bigg) * \bigg[ \bigg( \bm{M}_1 \bm{S}\matnot{3}\bigg)^T \Big(\bm{z} \odot \bm{z} \odot \bm{b}\matnot{1}\Big) + \\
    \bigg(\bm{M}_2 \bm{S}\matnot{3} \bigg)^T \Big(\bm{z} \odot \bm{b}\matnot{2} \Big) \bigg] = \\
    \bigg[ \bm{A}\matnot{3} \odot \bigg(\bm{M}_1 \bm{S}\matnot{3} \bigg) \bigg]^T \Big(\bm{z} \odot \bm{z} \odot \bm{z} \odot \bm{b}\matnot{1}\Big) + \bigg[ \bm{A}\matnot{3} \odot \bigg(\bm{M}_2 \bm{S}\matnot{3} \bigg) \bigg]^T \Big(\bm{z} \odot \bm{z} \odot \bm{b}\matnot{2}\Big)
\end{split}
\label{eq:polygan_lemma_x3_proof_part2}
\end{equation}

Replacing (\ref{eq:polygan_lemma_x3_proof_part2}) into (\ref{eq:polygan_lemma_x3_proof_part1}), we obtain (\ref{eq:polygan_third_order_decomp_init}).
\end{proof}

Note that the $\bm{\lambda}$ in Claim~\ref{lemma:polygan_lemma_third_order} is the equation behind Figure~\ref{fig:polygan_core_module}. By proving the claim, we have illustrated how the polynomial generator can be transformed into a network architecture for third-order approximation.

 \section{Derivations}
\label{sec:polygan_method_suppl}

In this Section, we will show how the \modelone{} generalizes to the $N^{th}$ order approximation. It suffices to find the decomposition that converts the $N^{th}$ order polynomial into a network structure (see Alg.~\ref{alg:polygan_generator_model1}). 

As done in Section~\ref{sec:polygan_decomposition}, we capture the $n^{th}$ order interactions by decomposing the parameter tensor $\bmcal{W}^{[n]}$ (with $2 \leqslant n \leqslant N$) as:
\begin{equation}
    \bmcal{W}^{[n]} = \underbrace{\sum_{j_1=n}^N \sum_{j_2=n-1}^{j_1-1}\ldots \sum_{j_{n-1}=2}^{j_{n-2}-1}}_{(n-1) \text{sums}} \bmcal{W}^{[n]}_{1:j_{n-1}:\ldots:j_1}
    \label{eq:polygan_nth_order_interactions_coupled_decomposition}
\end{equation}

The term $\bmcal{W}^{[n]}_{1:j_{n-1}:\ldots:j_1}$ denotes the interactions across the layers $1, j_{n-1}, \ldots, j_1$. The $N^{th}$ order approximation becomes:

\begin{equation}
    G(\bm{z}) = \bm{\beta} + \sum_{n=1}^N \bigg\{\bigg(\sum_{j_1=n}^N \sum_{j_2=n-1}^{j_1-1}\ldots \sum_{j_{n-1}=2}^{j_{n-2}-1} \Big(\bmcal{W}^{[n]}_{1:j_{n-1}:\ldots:j_1}\Big)_{(1)}\bigg) \bigg(\bigodot_{m = 1}^{n} \bm{z} \bigg) \bigg\}
    \label{eq:polygan_nth_order_approximation_tensor_interactions}
\end{equation}

By considering the mode-1 unfoding of \modelone{} (like in Section~\ref{sec:polygan_decomposition}), we obtain:

\begin{equation}
\begin{split}
    G(\bm{z}) = \bm{\beta} +  \bm{C} \sum_{n=1}^N \bigg\{\sum_{j_1=n}^N \sum_{j_2=n-1}^{j_1-1}\ldots \sum_{j_{n-1}=2}^{j_{n-2}-1} \Big(\bm{U}\matnot{j_1} \odot \ldots \odot \bm{U}\matnot{j_{n-1}} \odot \bm{U}\matnot{1}\Big)^T \Big(\bigodot_{m = 1}^{n} \bm{z} \Big) \bigg\}  = \\
    \bm{\beta} +  \bm{C} \sum_{n=1}^N \bigg\{\sum_{j_1=n}^N \sum_{j_2=n-1}^{j_1-1}\ldots \sum_{j_{n-1}=2}^{j_{n-2}-1} \bigg( \Big(\bm{U}\matnot{j_1}^T \bm{z}\Big) * \ldots * \Big(\bm{U}\matnot{j_{n-1}}^T \bm{z}\Big) * \Big(\bm{U}\matnot{1}^T \bm{z}\Big) \bigg) \bigg\} = \bm{\beta} +  \bm{C} \bm{x}_{N}
\end{split}
\label{eq:polygan_nth_order_approximation}
\end{equation}

where we use $\bm{x}_{N}$ as an abbreviation of the sums. In the last equation, we have used Lemma~\ref{lemma:polygan_lemma_hadamard_kr2} (Section~\ref{sec:polygan_theoretical_suppl}).

\begin{claim}
The $N^{th}$ order approximation of (\ref{eq:polygan_nth_order_approximation_tensor_interactions}) can be implemented with a neural network as described in Alg.~\ref{alg:polygan_generator_model1}. 
\label{lemma:polygan_nth_order_approximation_to_network}
\end{claim}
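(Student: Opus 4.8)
The plan is to prove the claim by induction on the approximation order $N$, showing that the vector $\bm{\kappa}$ produced after the final ($N$-th) iteration of Alg.~\ref{alg:polygan_generator_model1} equals the quantity $\bm{x}_N$ abbreviated in (\ref{eq:polygan_nth_order_approximation}). Since the algorithm outputs $\bm{\beta} + \bm{C}\bm{\kappa}$, establishing $\bm{\kappa} = \bm{x}_N$ immediately reproduces $G(\bm{z})$ of (\ref{eq:polygan_nth_order_approximation_tensor_interactions}). Throughout I take $\bm{v} = \bm{z}$ (folding the global linear map into the factors $\bm{U}\matnot{m}$) and abbreviate $\bm{u}_m \doteq \bm{U}\matnot{m}^T\bm{z}$, so that the loop update reads $\bm{\kappa}_n = \bm{\kappa}_{n-1} + \bm{u}_n * \bm{\kappa}_{n-1}$ with initialization $\bm{\kappa}_1 = \bm{u}_1$.

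The core step is to establish the recursion $\bm{x}_N = \bm{x}_{N-1} + \bm{u}_N * \bm{x}_{N-1}$, which matches the loop update verbatim. The cleanest route I would present rests on unrolling the loop into the closed form $\bm{\kappa}_N = \bm{u}_1 * (\bm{1} + \bm{u}_2) * \cdots * (\bm{1} + \bm{u}_N)$; expanding this Hadamard product over the $2^{N-1}$ ways of selecting either $\bm{1}$ or $\bm{u}_m$ from each factor $(\bm{1}+\bm{u}_m)$ yields exactly one term per subset $S \subseteq \{2,\dots,N\}$, namely the Hadamard product of $\bm{u}_1$ with the $\bm{u}_m$ for $m\in S$. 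A short bijection then matches this with $\bm{x}_N$: writing $\bm{x}_N = \sum_{n=1}^N T_n^{(N)}$, where $T_n^{(N)}$ is the order-$n$ block of $(n-1)$ nested sums in (\ref{eq:polygan_nth_order_approximation}), the constraints $N \geq j_1 > j_2 > \cdots > j_{n-1} \geq 2$ enumerate each $(n-1)$-element subset of $\{2,\dots,N\}$ exactly once, and since the Hadamard product is commutative the decreasing ordering is immaterial. Hence $T_n^{(N)}$ collects precisely the subsets of size $n-1$, and $\bm{x}_N$ equals the full subset sum, i.e. $\bm{\kappa}_N$.

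An equivalent and more hands-on derivation of the recursion, which I would include as a check, splits the outermost index $j_1\in\{n,\dots,N\}$ of each block $T_n^{(N)}$ into the cases $j_1<N$ and $j_1=N$. The $j_1<N$ part reproduces $T_n^{(N-1)}$ verbatim, while the $j_1=N$ part factors a common $\bm{u}_N$ out of the Hadamard product and leaves precisely the nested sums defining $T_{n-1}^{(N-1)}$, giving $T_n^{(N)} = T_n^{(N-1)} + \bm{u}_N * T_{n-1}^{(N-1)}$ together with the boundary conventions $T_1^{(N)} = \bm{u}_1$ and $T_N^{(N-1)} = 0$. Summing over $n$ and reindexing the second family then telescopes to $\bm{x}_N = \bm{x}_{N-1} + \bm{u}_N * \bm{x}_{N-1}$.

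With the recursion in hand the induction closes immediately: the base case $N=1$ is $\bm{\kappa}_1 = \bm{u}_1 = \bm{x}_1$, and assuming $\bm{\kappa}_{N-1} = \bm{x}_{N-1}$ the loop update gives $\bm{\kappa}_N = \bm{\kappa}_{N-1} + \bm{u}_N * \bm{\kappa}_{N-1} = \bm{x}_{N-1} + \bm{u}_N * \bm{x}_{N-1} = \bm{x}_N$. I expect the only real obstacle to be bookkeeping: verifying that the decreasing-index nested sums of (\ref{eq:polygan_nth_order_approximation}) range over each subset of $\{2,\dots,N\}$ exactly once, and that splitting off $j_1=N$ lowers the order $n$ and the top index $N$ consistently. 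Note that Lemma~\ref{lemma:polygan_lemma_hadamard_kr2}, already invoked to pass from the Khatri-Rao form to the Hadamard form in (\ref{eq:polygan_nth_order_approximation}), guarantees that the underlying algebraic correspondence is exact, so no additional identities are required beyond the combinatorial reindexing.
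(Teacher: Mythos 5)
Your proof is correct and takes essentially the same route as the paper's: an induction on the order in which the terms of $\bm{x}_N$ containing the top index are split off to give the recursion $\bm{x}_N = \bm{x}_{N-1} + \big(\bm{U}\matnot{N}^T\bm{z}\big) * \bm{x}_{N-1}$, which is exactly the loop update of Alg.~\ref{alg:polygan_generator_model1} (the paper performs this split at the level of the nested sums with $j_1 = N+1$ and then invokes Lemma~\ref{lemma:polygan_lemma_hadamard_kr2}, just as you do). Your auxiliary closed form $\bm{\kappa}_N = \big(\bm{U}\matnot{1}^T\bm{z}\big) * \big(\bm{1} + \bm{U}\matnot{2}^T\bm{z}\big) * \cdots * \big(\bm{1} + \bm{U}\matnot{N}^T\bm{z}\big)$ with the subset bijection is a clean way of making explicit the combinatorics the paper leaves implicit, but it does not change the substance of the argument.
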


\begin{proof}
We will use induction to prove the  Claim. 
For $N=2$, it trivially holds, while the proof for $N=3$ is provided in Claim~\ref{lemma:polygan_additive_cp_model_hadamard_third_order}. Suppose it holds for $N^{th}$ order approximation; we prove below that it holds for ${N+1}^{th}$ order approximation.

Let us denote the approximation of (\ref{eq:polygan_nth_order_approximation_tensor_interactions}) as $G_{N}(\bm{z})$.
The ${(N+1)}^{th}$ order approximation from (\ref{eq:polygan_nth_order_approximation_tensor_interactions}) is:

\begin{equation}
\begin{split}
    G_{N+1}(\bm{z}) = \bm{\beta} + \sum_{n=1}^{N+1} \bigg\{\bigg(\sum_{j_1=n}^{N+1} \sum_{j_2=n-1}^{j_1-1}\ldots \sum_{j_{n-1}=2}^{j_{n-2}-1} \Big(\bmcal{W}^{[n]}_{1:j_{n-1}:\ldots:j_1}\Big)_{(1)}\bigg) \bigg(\bigodot_{m = 1}^{n} \bm{z} \bigg) \bigg\} = \\
    \bm{\beta} + \sum_{n=1}^N \bigg\{\bigg(\sum_{j_1=n}^N \sum_{j_2=n-1}^{j_1-1}\ldots \sum_{j_{n-1}=2}^{j_{n-2}-1} \Big(\bmcal{W}^{[n]}_{1:j_{n-1}:\ldots:j_1}\Big)_{(1)}\bigg) \bigg(\bigodot_{m = 1}^{n} \bm{z} \bigg)  + \\
    \sum_{j_2=n-1}^{N}\ldots \sum_{j_{n-1}=2}^{j_{n-2}-1} \Big(\bmcal{W}^{[n]}_{1:j_{n-1}:\ldots:j_2:(N+1)}\Big)_{(1)} \Big(\bigodot_{m = 1}^{n} \bm{z} \Big) \bigg\} + \Big(\bmcal{W}^{[N+1]}_{1:2:\ldots:(N+1)}\Big)_{(1)} \Big(\bigodot_{m = 1}^{N+1} \bm{z} \Big) 
\end{split}
\label{eq:polygan_nplusone_order_approximation_tensor_interactions_induction}
\end{equation}

In the last equation, the first term in the sums is $\bm{x}_{N}$; for the rest two terms we apply Lemma~\ref{lemma:polygan_lemma_hadamard_kr2}:

\begin{equation}
\begin{split}
    G_{N+1}(\bm{z}) = \bm{\beta} + \bm{C}\bm{x}_N + \bm{C}\bigg\{\Big(\bm{U}\matnot{N+1}^T \bm{z}\Big) * \Big(\bm{U}\matnot{N}^T \bm{z}\Big) * \ldots * \Big(\bm{U}\matnot{2}^T \bm{z}\Big) * \Big(\bm{U}\matnot{1}^T \bm{z}\Big)\bigg\} + \\
    \bm{C} \sum_{n=1}^N \bigg\{\sum_{j_2=n-1}^{N}\ldots \sum_{j_{n-1}=2}^{j_{n-2}-1} \bigg( \Big(\bm{U}\matnot{N+1}^T \bm{z}\Big) * \Big(\bm{U}\matnot{j_2}^T \bm{z}\Big) * \ldots * \Big(\bm{U}\matnot{j_{n-1}}^T \bm{z}\Big) * \Big(\bm{U}\matnot{1}^T \bm{z}\Big) \bigg) \bigg\} = \\
    \bm{\beta} + \bm{C}\bm{x}_N + \bm{C} \bigg\{ \Big(\bm{U}\matnot{N+1}^T \bm{z}\Big) * \bigg[ \Big(\bm{U}\matnot{N}^T \bm{z}\Big) * \ldots * \Big(\bm{U}\matnot{2}^T \bm{z}\Big) * \Big(\bm{U}\matnot{1}^T \bm{z}\Big) + \bm{\lambda} \bigg]\bigg\}
\end{split}
\label{eq:polygan_nplusone_order_approximation_tensor_interactions_induction_part2}
\end{equation}

where 
\begin{equation}
    \bm{\lambda} = \sum_{n=1}^N \sum_{j_2=n-1}^{N}\ldots \sum_{j_{n-1}=2}^{j_{n-2}-1} \bigg( \Big(\bm{U}\matnot{j_2}^T \bm{z}\Big) * \ldots * \Big(\bm{U}\matnot{j_{n-1}}^T \bm{z}\Big) * \Big(\bm{U}\matnot{1}^T \bm{z}\Big) \bigg)
\end{equation}
The term $\bm{\lambda}$ is equal to the $\kappa=(n-1)^{th}$ order of (\ref{eq:polygan_nth_order_approximation}), while there is only a single term for $n=N$. Therefore, (\ref{eq:polygan_nplusone_order_approximation_tensor_interactions_induction_part2}) is transformed into:

\begin{equation}
    G_{N+1}(\bm{z}) = \bm{\beta} + \bm{C}\bm{x}_N + \bm{C} \Big\{ \Big(\bm{U}\matnot{N+1}^T \bm{z}\Big) * \bm{x}_N \Big\}
\end{equation}

which is exactly the form described by Alg.~\ref{alg:polygan_generator_model1}. This concludes the induction proof.

\end{proof}

\section{Experiments on surfaces}
\label{sec:polygan_experiments_suppl}

\textbf{Astroid}: We implement a superellipse with parametric expression $[\alpha \cos^3{t}, \alpha \sin^3{t}]$ for $t \in [-\alpha, \alpha]$. This has a more complex distribution and four sharp edges. The random samples are visualized in Figure~\ref{fig:polygan_linear_astroid}. \modelname{} models the data distribution accurately in contrast to the two baselines. 

\begin{figure}[h]
    \subfloat[GT]{\includegraphics[width=0.28\linewidth]{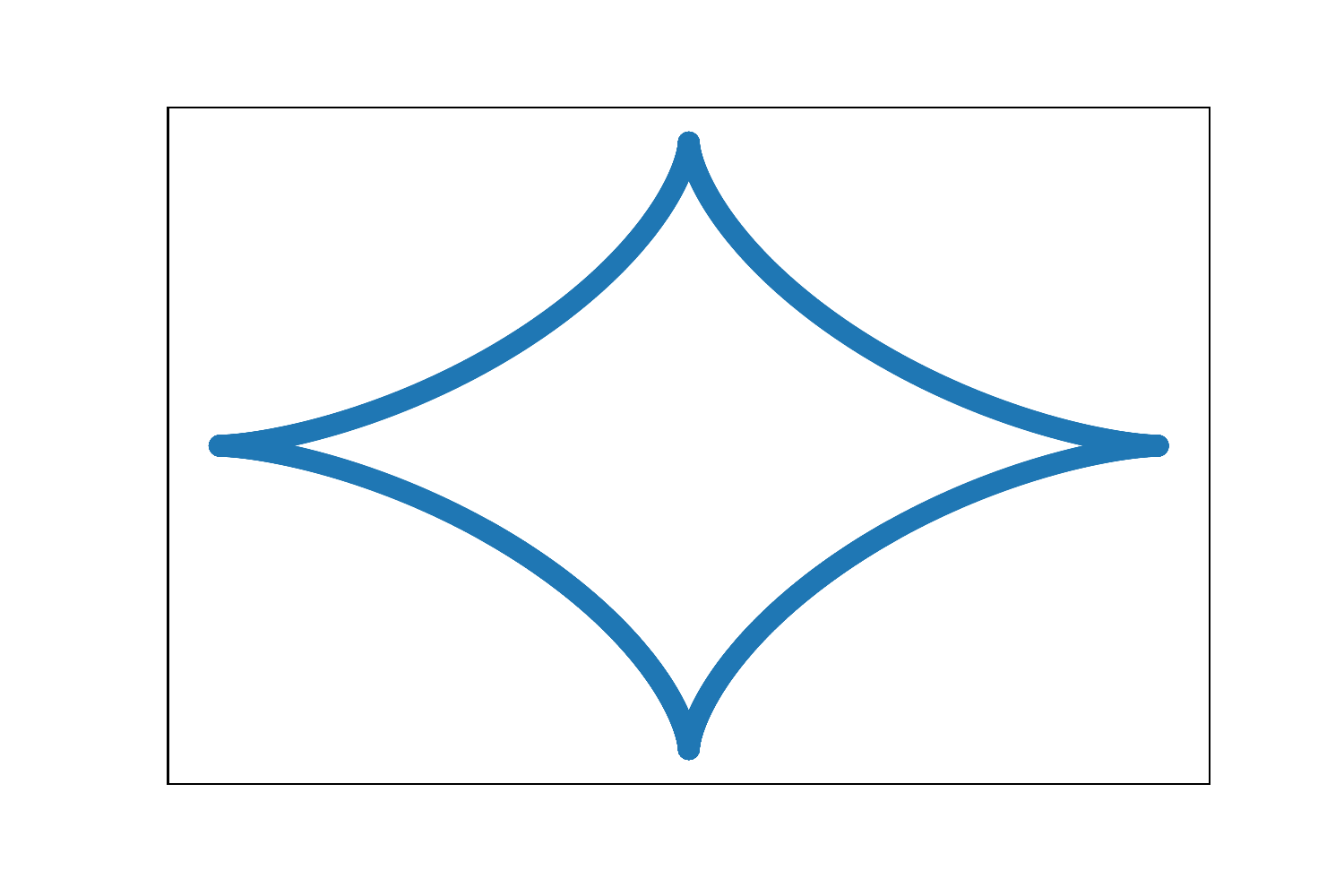}\hspace{-4mm}}
    \subfloat[Orig]{\includegraphics[width=0.28\linewidth]{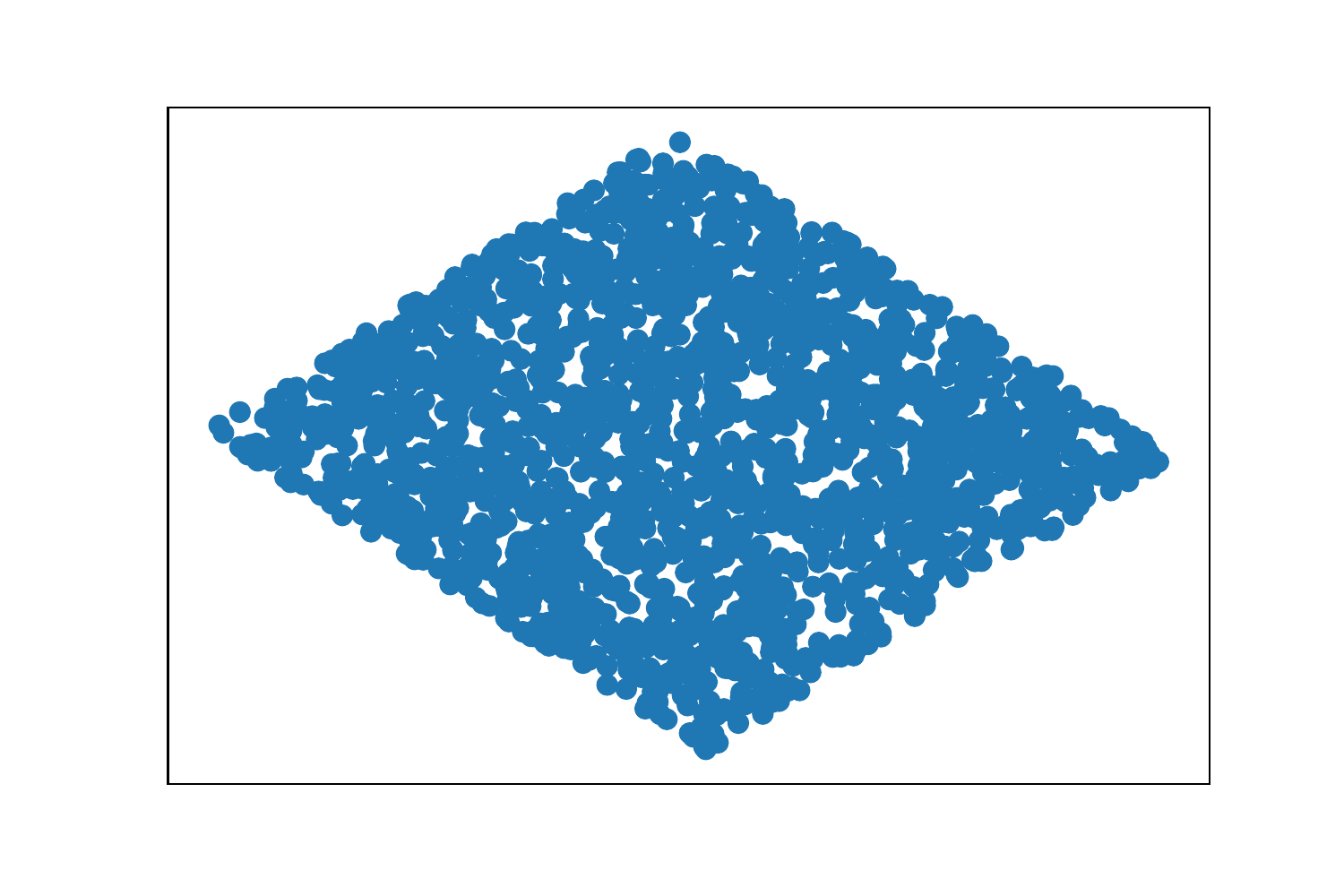}\hspace{-4mm}}
    \subfloat[Concat]{\includegraphics[width=0.28\linewidth]{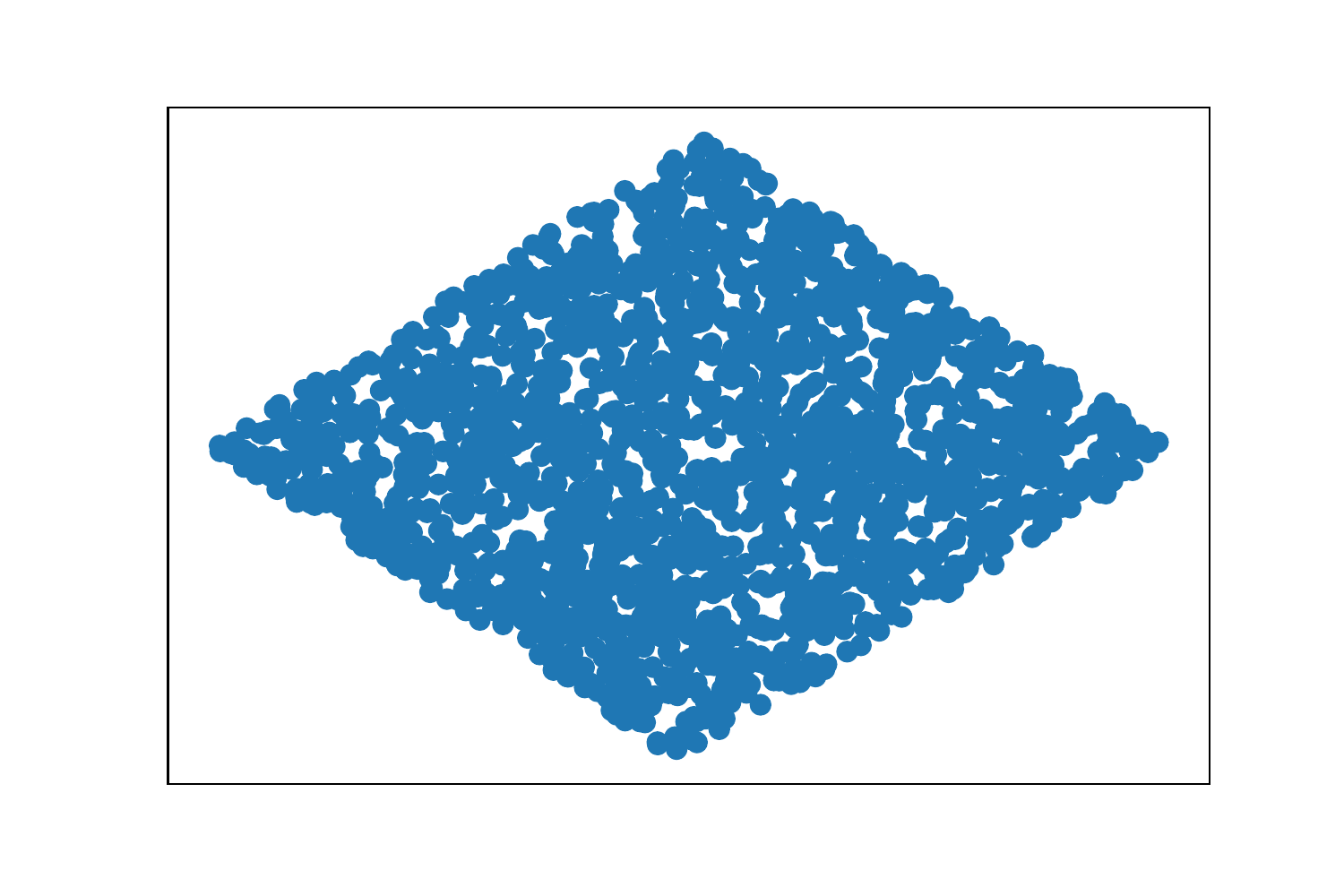}\hspace{-4mm}}
    \subfloat[\modelname]{\includegraphics[width=0.28\linewidth]{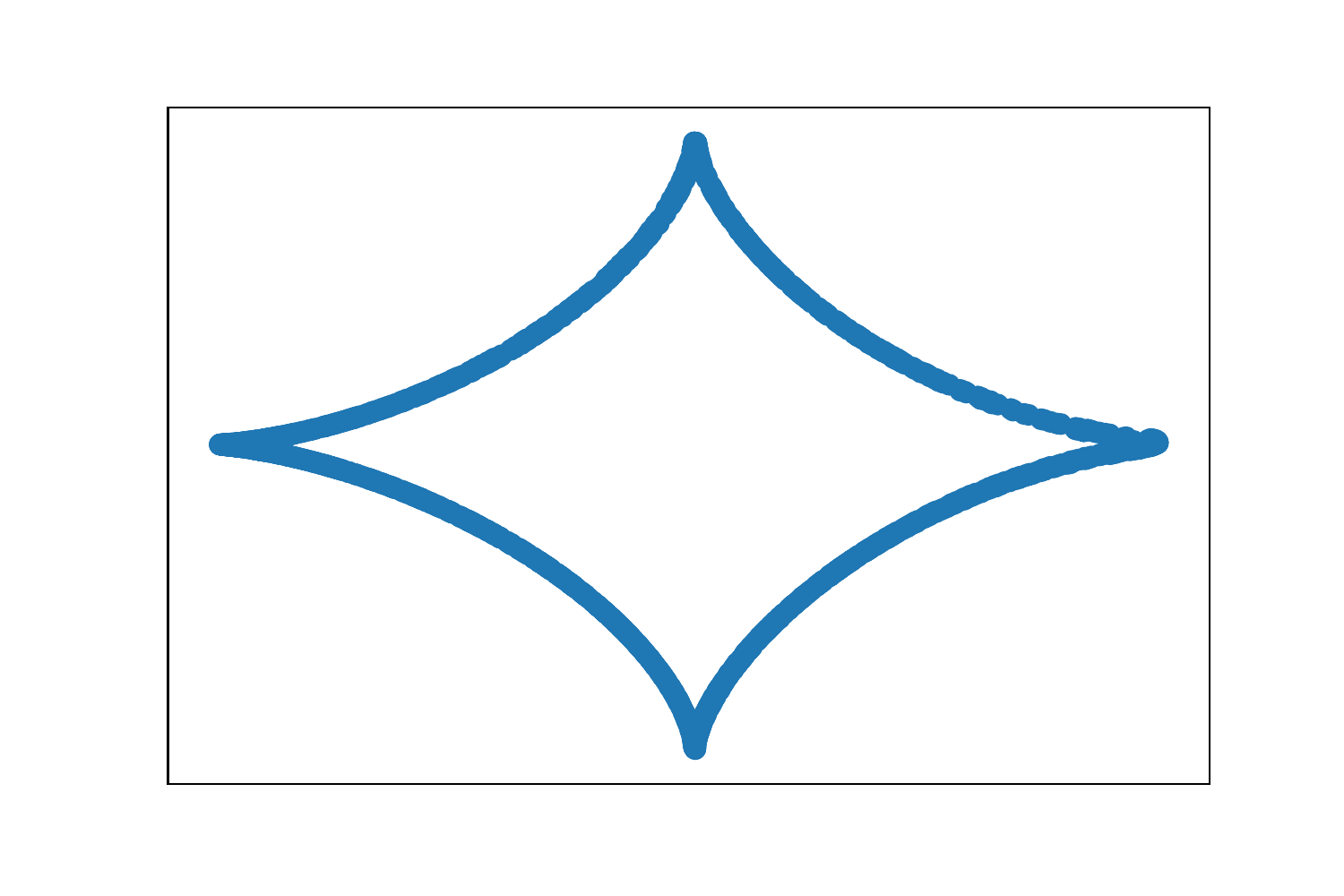}\hspace{-4mm}}
\caption{Synthesized data for learning the `astroid' signal. No activation functions are used in the generators.}
\label{fig:polygan_linear_astroid}
\end{figure}

\label{ssec:polygan_experiments_synthetic_suppl}

We conduct three experiments in which the data distribution is analytically derived. The experiments are: 

\textbf{Sin3D}: The data manifold is an extension over the 2D manifold of the sinusoidal experiment (Section~\ref{sec:polygan_synthetic_2d}). The function we want to learn is $G(\bm{z}): \realnum^2 \to \realnum^3$ with the data manifold described by the vector $[x, y, \sin(10 * \sqrt{x^2 + y^2})]$ for $x, y \in [-0.5, 0.5]$. 

In Figure~\ref{fig:polygan_linear_sin3d}, $20,000$ samples are sampled from the generators and visualized. \modelname{} captures the data distribution, while `Orig' and `Concat' fail.

\begin{figure}[h]
    \subfloat[GT]{\includegraphics[width=0.23\linewidth]{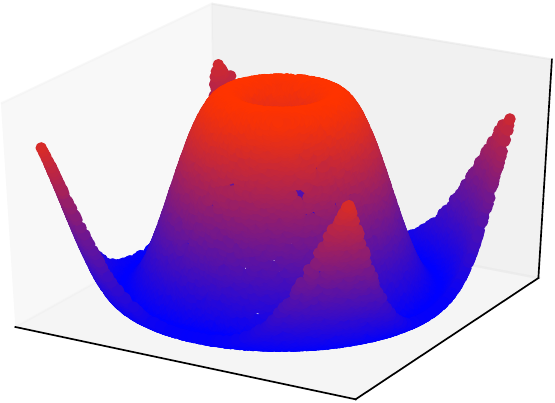}\hspace{+0.6mm}}
    \subfloat[Orig]{\includegraphics[width=0.23\linewidth]{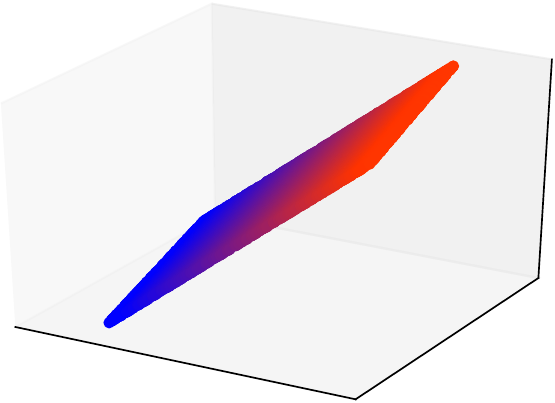}\hspace{+0.6mm}}
    \subfloat[Concat]{\includegraphics[width=0.23\linewidth]{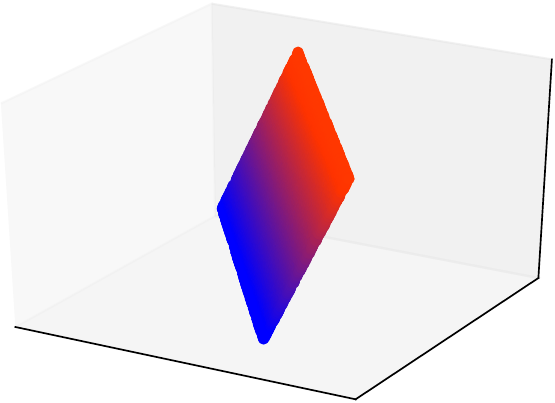}\hspace{+0.6mm}}
    \subfloat[\modelname]{\includegraphics[width=0.23\linewidth]{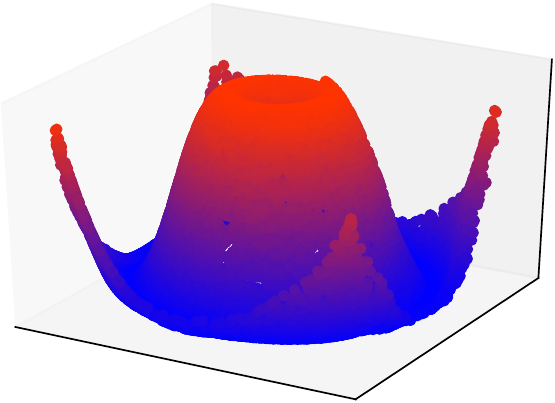}\hspace{+0.3mm}}
\caption{Experiment on 3D synthetic data. From left to right: (a) the data distribution, (b) `Orig', (c) `Concat', (d) \modelname. As expected, the `Orig' and the `Concat' cannot capture the data distribution.}
\label{fig:polygan_linear_sin3d}
\end{figure}

\textbf{Swiss roll}: The three dimensional vector $[t \cdot \sin{t}, y, t \cdot \cos{t}] + 0.05 \cdot s$ for $t, y \in [0, 1]$ and $s \sim \mathcal{N}(0, 1)$ forms the data manifold\footnote{This is a standard synthetic distribution in popular machine learning frameworks such as scikit-learn.}. In Figure~\ref{fig:polygan_linear_swiss_roll}, $20,000$ samples are visualized.

\begin{figure}[h]
    \subfloat[GT]{\includegraphics[width=0.23\linewidth]{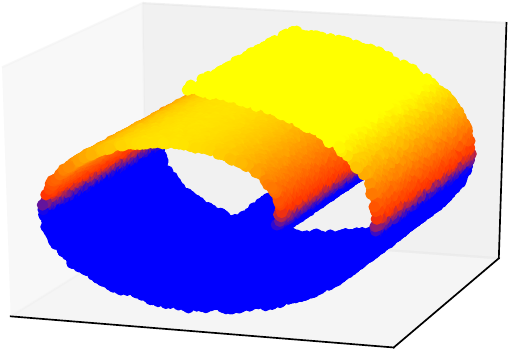}\hspace{+0.6mm}}
    \subfloat[Orig]{\includegraphics[width=0.23\linewidth]{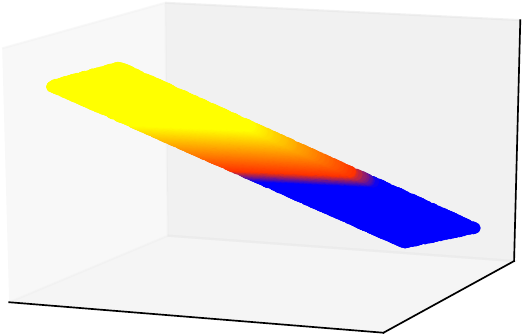}\hspace{+0.6mm}}
    \subfloat[Concat]{\includegraphics[width=0.23\linewidth]{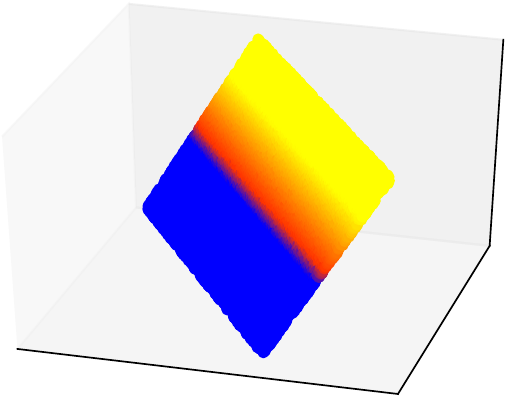}\hspace{+0.6mm}}
    \subfloat[\modelname]{\includegraphics[width=0.23\linewidth]{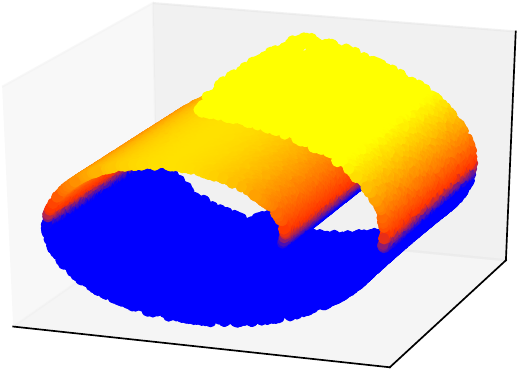}\hspace{+0.1mm}}
\caption{Experiment on 3D synthetic data (`swiss roll'). From left to right: (a) the data distribution, (b) `Orig', (c) `Concat', (d) \modelname.}
\label{fig:polygan_linear_swiss_roll}
\end{figure}

\textbf{Gabriel's Horn}: The three dimensional vector $[x, \alpha \cdot \frac{\cos{t}}{x}, \alpha \cdot \frac{\sin{t}}{x}]$ for $t \in [0, 160 \pi]$ and $x \in [1, 4]$ forms the data manifold. The dependence on both sinusoidal and the function $\frac{1}{x}$ makes this curve challenging for a polynomial expansion. In Figure~\ref{fig:polygan_linear_ghorn}, the synthesized samples are plotted. \modelname{} learns how to generate samples on the manifold despite the fraction in the parametric form.

\begin{figure}[h]
    \subfloat[GT]{\includegraphics[width=0.23\linewidth]{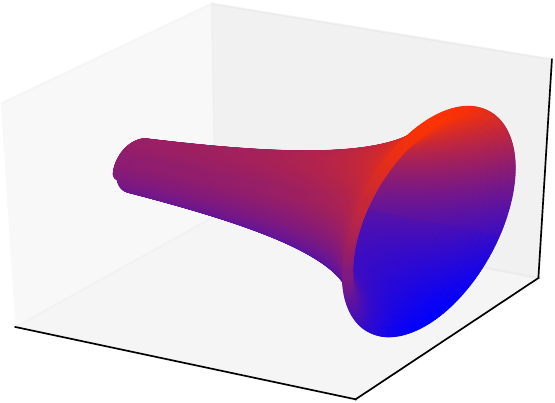}\hspace{+0.6mm}}
    \subfloat[Orig]{\includegraphics[width=0.23\linewidth]{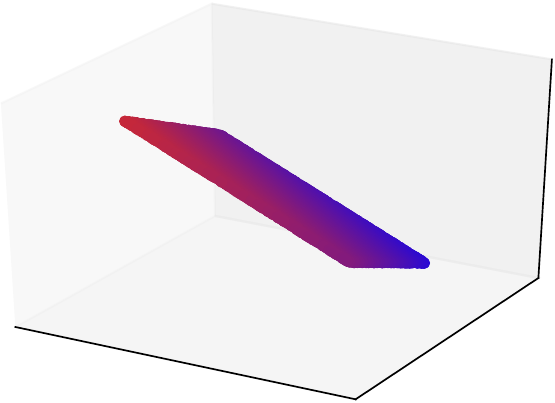}\hspace{+0.6mm}}
    \subfloat[Concat]{\includegraphics[width=0.23\linewidth]{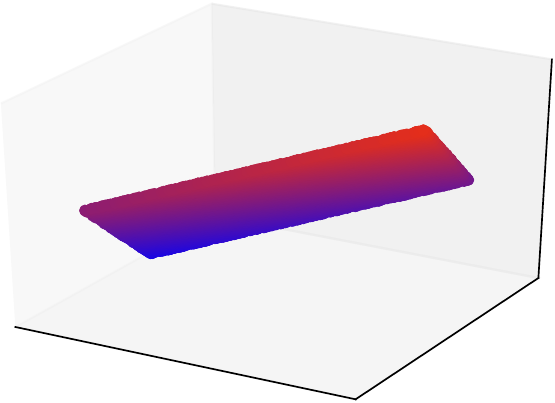}\hspace{+0.6mm}}
    \subfloat[\modelname]{\includegraphics[width=0.23\linewidth]{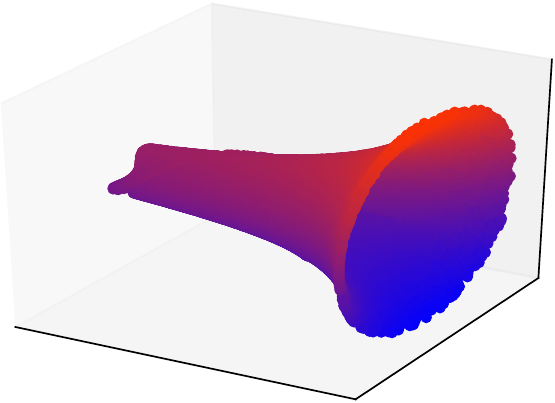}\hspace{+0.1mm}}
\caption{Synthesized data on `Gabriel's Horn'. From left to right: (a) the data distribution, (b) `Orig', (c) `Concat', (d) \modelname.}
\label{fig:polygan_linear_ghorn}
\end{figure}

 \section{Image generation with linear blocks}
\label{sec:polygan_image_generation_linear_suppl}

Apart from digit generation (Section~\ref{sec:polygan_linear_mnist}), we conduct two experiments on image generation of face and natural scenes. Since both distributions are harder than the digits one, we extend the approximation followed on Section~\ref{sec:polygan_linear_mnist} by one order, i.e., we assume a fifth-order approximation. We emphasize that each block is a residual block with no activation functions.

\textbf{Faces}: In the experiment with faces, we utilize as the training samples the YaleB~\citep{georghiades2001few} dataset. The dataset includes greyscale images of faces under extreme illuminations. We rescale all of the images into $64\times64$ for our analysis.

Random samples are illustrated in Figure~\ref{fig:polygan_linear_yaleb_samples}. Our method generates diverse images and captures the case of illuminating either half part of the face, while `Orig' and `Concat' generate images that have a dark side only on the left and right side, respectively. The difference becomes profound in the finer details of the face (please zoom in), where both baselines fail to synthesize realistic semantic parts of the face.

\begin{figure}[htb]
    \subfloat[GT]{\includegraphics[width=0.24\linewidth]{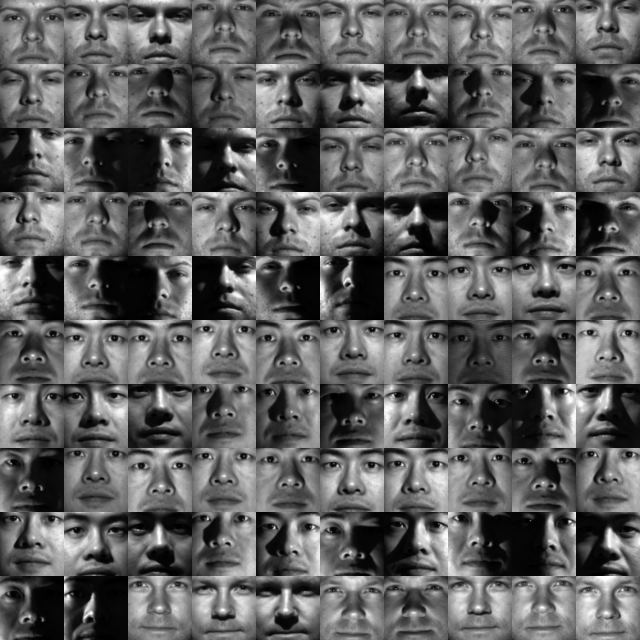}\hspace{1mm}}
    \subfloat[Orig]{\includegraphics[width=0.24\linewidth]{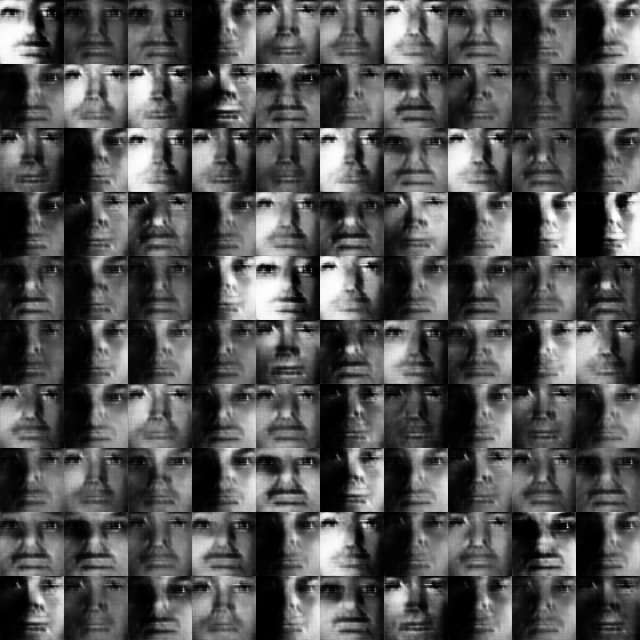}\hspace{1mm}}
    \subfloat[Concat]{\includegraphics[width=0.24\linewidth]{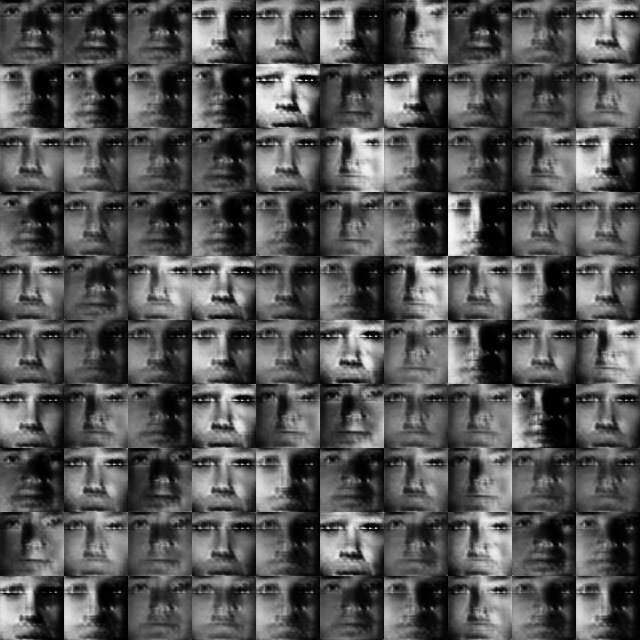}\hspace{1mm}}
    \subfloat[\modelname]{\includegraphics[width=0.24\linewidth]{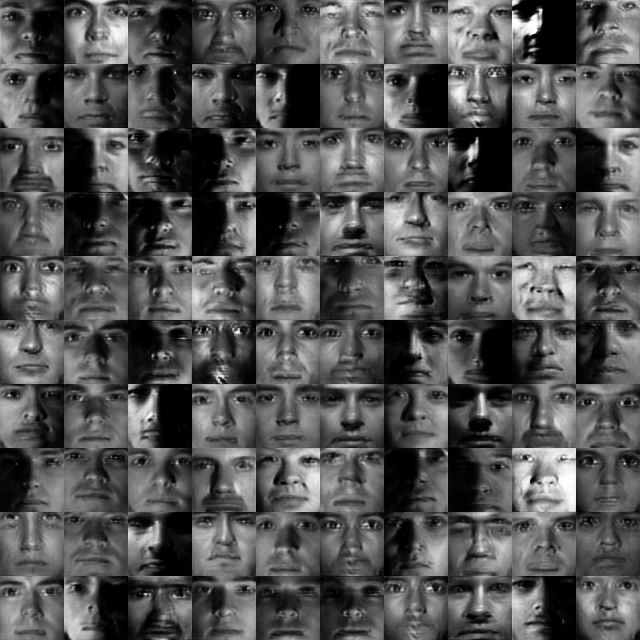}\hspace{1mm}}
\caption{Image generation on faces (YaleB~\citep{georghiades2001few}) for a generator with linear blocks and a single activation function only on the output (i.e., $\tan h$). Notice that our method can illuminate either the left or right part of the face, in contrast to  `Orig' (and `Concat') which  generate images that have a dark side only on the left (respectively right) side. In addition, both `Orig' and `Concat' fail to capture the fine details of the facial structure (please zoom in for the details).}
\label{fig:polygan_linear_yaleb_samples}
\end{figure}

\textbf{Natural scenes}: We further evaluate the generation of natural images, specifically by training on CIFAR10~\citep{krizhevsky2014cifar}. CIFAR10 includes $50,000$ training images of $32\times32\times3$ resolution.

In Table~\ref{tab:polygan_linear_conditional_cifar_quantitative}, we evaluate the standard metrics of Inception Score (IS) and Frechet Inception Distance (FID) (see more details for the metrics in section~\ref{sec:polygan_experiments_sota_networks}). Our model outperforms both `Orig' and `Concat' by a considerable margin. In Figure~\ref{fig:injection_gan_linear_cifar10_conditional}, some random synthesized samples are presented.

\begin{table}[h]
\centering
\caption{IS/FID scores on CIFAR10~\citep{krizhevsky2014cifar} with linear blocks.} 
 \begin{tabular}{|c | c | c|} 
     \hline
     \multicolumn{3}{|c|}{conditional SNGAN with linear blocks on CIFAR10}\\
     \hline
     Model & IS ($\uparrow$) & FID ($\downarrow$)\\
     \hline
     Orig & $4.47\pm 0.21$ & $156.67\pm 12.29$\\
     \hline
     Concat & $4.23\pm 0.37$ & $188.08\pm 17.00$\\
     \hline
     \modelname{} & $\bm{6.43\pm 0.11}$ & $\bm{53.50\pm 2.71}$\\
     \hline
 \end{tabular}
 \label{tab:polygan_linear_conditional_cifar_quantitative}
\end{table}

\begin{figure}[htb]
    \subfloat[GT]{\includegraphics[width=0.24\linewidth]{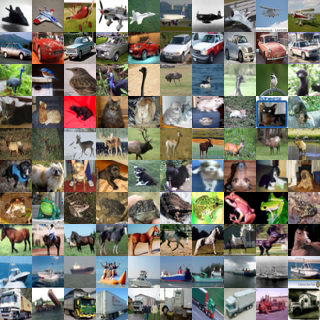}\hspace{1mm}}
    \subfloat[Orig]{\includegraphics[width=0.24\linewidth]{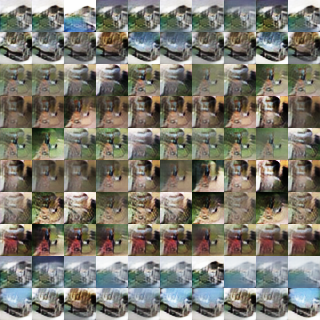}\hspace{1mm}}
    \subfloat[Concat]{\includegraphics[width=0.24\linewidth]{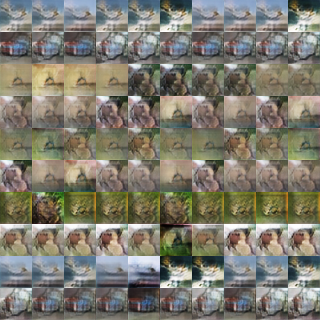}\hspace{1mm}}
    \subfloat[\modelname]{\includegraphics[width=0.24\linewidth]{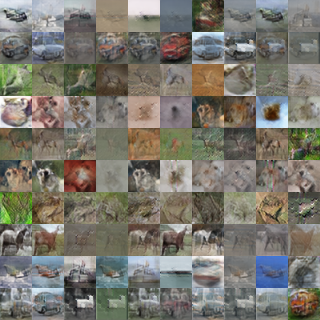}\hspace{1mm}}
\caption{Conditional image generation on CIFAR10 for a generator with linear blocks and a single activation function. Our approach generates more realistic samples in comparison to the compared methods, where severe mode collapse also takes place.}
\label{fig:injection_gan_linear_cifar10_conditional}
\end{figure}

 \section{Image generation with activation functions}
\label{sec:polygan_experiments_sota_networks}
To demonstrate the flexibility of the \modelname, we utilize three different popular generators. The three acrhitectures chosen are DCGAN~\citep{radford2015unsupervised}, SNGAN~\citep{miyato2018spectral}, and SAGAN~\citep{zhang2018self}. Each original generator is converted into a polynomial expansion, while we use the non-linearities to boost the performance of the polynomial generator.  The hyper-parameters are kept the same as the corresponding baseline. Algorithms \ref{alg:polygan_sngan_generator} and \ref{alg:polygan_sngan_generator_injection} succinctly present the key differences of our approach compared to the traditional one (in the case of SNGAN, similarly for other architectures). 

In addition to the baseline, we implement the most closely related alternative to our framework, namely instead of using the Hadamard operator as in Figure~\ref{fig:polygan_core_module}, we concatenate the noise with the feature representations at that block. The latter approach is frequently used in the literature~\citep{berthelot2017began, brock2019large} (referred as ``Concat'' in the paper). The number of the trainable parameters of the generators are reported in Table \ref{tab:polygan_table_params_generators}. Our method has only a minimal increase of the parameters, while the concatenation increases the number of parameters substantially.

To reduce the variance often observed during GAN training~\citep{lucic2018gans, odena2018generator}, each reported score is averaged over $10$ runs utilizing different seeds. The metrics we utilize are Inception Score (IS)~\citep{salimans2016improved} and Frechet Inception Distance (FID)~\citep{heusel2017gans}.

Below, we perform an ablation study on Section~\ref{ssec:polygan_experiments_ablation_suppl}, and then present the experiments on unsupervised (Section~\ref{ssec:polygan_experiments_unsupervised}) and conditional image generation (Section~\ref{ssec:polygan_experiments_conditional}) respectively. 

\textbf{Datasets}: We use CIFAR10~\citep{krizhevsky2014cifar} and Imagenet~\citep{russakovsky2015imagenet} as the two most widely used baselines for GANs:
\begin{itemize}
    \item \textbf{CIFAR10}~\citep{krizhevsky2014cifar} includes $60,000$ images of $32\times\ 32$ resolution. We use $50,000$ images for training and the rest for testing. 
    \item \textbf{Imagenet}~\citep{russakovsky2015imagenet} is a large scale dataset that includes over one million training images and $50,000$ validation images. We reshape the images to $128\times128$ resolution.
\end{itemize}

\textbf{Baseline architectures}: The architectures employed are: 
\begin{itemize}
    \item DCGAN~\citep{radford2015unsupervised}, as implemented in \url{https://github.com/pytorch/examples/tree/master/dcgan}. This is a widely used baseline.
    \item SNGAN~\citep{miyato2018spectral}, as implemented in \url{https://github.com/pfnet-research/sngan_projection}. SNGAN is a strong performing GAN that introduced a spectral normalization in the discriminator.
    \item SAGAN~\citep{zhang2018self}, as implemented in \url{https://github.com/voletiv/self-attention-GAN-pytorch}. This is a recent network architecture that utilizes the notion of self-attention \citep{wang2018non} in a GAN setting, achieving impressive results on Imagenet \citep{russakovsky2015imagenet}.
\end{itemize}

The default hyper-parameters are left unchanged. The aforementioned codes are used for reporting the results of both the baseline and our method to avoid any discrepancies, e.g. different frameworks resulting in unfair comparisons. The source code will be released to enable the reproduction of our results.

\textbf{Evaluation metrics}: The popular Inception Score (IS)~\citep{salimans2016improved} and Frechet Inception Distance (FID)~\citep{heusel2017gans} are used for the quantitative evaluation. Both scores extract feature representations from a pretrained classifier (in practice the Inception network~\citep{szegedy2015going}). 
Despite their shortcomings, IS and FID are widely used~\citep{lucic2018gans, creswell2018generative}, since alternative metrics fail for generative models~\citep{theis2015note}. 

The Inception Score is defined as 

\begin{equation}
\exp\left (\mathbb{E}_{\bm{x} \in P_\theta} [KL(p(y \vert \bm{x}) \Vert p(y)]  \right)
\end{equation}

where $\bm{x}$ is a generated sample and $p(y\vert \mathbf{x})$ is the conditional distribution for labels $y$. The distribution $p(y)$ over the labels is approximated by $\frac{1}{M}\sum_{n=1}^M p(y|\bm{x}_n)$ for $\bm{x}_n$ generated samples. Following the methods in the literature~\citep{miyato2018spectral}, we compute the inception score for $M=5,000$ generated samples per run ($10$ splits for each run). 

The Frechet Inception Distance (FID) utilizes feature representations from a pretrained network~\citep{szegedy2015going} and assumes that the distributions of these representations are Gaussian. Denoting the representations of real images as $\mathcal{N}(\bm{\mu}_r, \bm{C}_r)$ and the generated (fake) as $\mathcal{N}(\bm{\mu}_f, \bm{C}_f)$, FID is:

\begin{equation}
\|\bm{\mu}_r-\bm{\mu}_f\|_2^2+ trace \bigl(\bm{C}_r+\bm{C}_f-2\bigl(\bm{C}_r \bm{C}_f\bigr)^{1/2}\big)
\end{equation}

In the experiments, we use $M=10,000$ to compute the mean and covariance of the real images and $M=10,000$ synthesized samples for $\bm{\mu}_f, \bm{C}_r$.

For both scores the original tensorflow inception network weights are used; the routines of tensorflow.contrib.gan.eval are called for the metric evaluation.

\subsection{Implementation details}
\label{ssec:polygan_experiments_implementation}

We experimentally define that a (series of) affine transformation(s) on the input noise $\bm{z}$ are beneficial before using the transformed $\bm{z}$ for the Hadamard products.\footnote{A similar transformation is performed in other GAN architectures, such as in \citet{karras2018style}.} These affine transformations are henceforth mentioned as \emph{global transformations} on $\bm{z}$.

The implementation details for each network are the following: 
\begin{itemize}
    \item DCGAN: We use a global transformation followed by a RELU non-linearity. WThe rest details remain the same as the baseline model.
    \item SNGAN: Similarly to DCGAN, we use a global transformation with a RELU non-linearity. 
\end{itemize}

\begin{table}
\begin{tabular}{l  l}
\parbox{0.48\textwidth}{
\begin{algorithm}[H]
	\begin{small}
	\SetKwInOut{Input}{Input}
	\SetKwInOut{Output}{Output}
	\SetKwInOut{Init}{Initialize}
	\Input{Noise $\bm{z} \in \realnum^{128}$, $\phi = $ RELU}
	\Output{$\bm{x} \in \realnum^{32 \times 32 \times 3}$}
	    \;
	    \;
        \% fully-connected layer for reshaping.
        
        $\bm{h}$ = $\phi$(Linear($\bm{z}$))  \% dims out: $4 \times 4 \times 256 $.\; \;\;\;
        
        \For{n=1:3}{
        
            \% resnet blocks.
            
            $\bm{h}$ = resblock($\bm{h}$)  
            
            \% dims out: $ (4 \cdot 2 ^ n) \times (4 \cdot 2 ^ n) \times 256 $.\;\;\; \; 
        }
            
        $\bm{x}$ = tanh(Conv($\bm{h}$)) \% dims out: $ 32 \times 32 \times 3 $.
        \caption{Original SNGAN generator.}
	\label{alg:polygan_sngan_generator}
	\end{small}
\end{algorithm}

}
&
\parbox{0.48\textwidth}{
\begin{algorithm}[H]
	\begin{small}
	\SetKwInOut{Input}{Input}
	\SetKwInOut{Output}{Output}
	\SetKwInOut{Init}{Initialize}
	\Input{Noise $\bm{z} \in \realnum^{128}$, $\phi = $ RELU}
	\Output{$\bm{x} \in \realnum^{32 \times 32 \times 3}$}

        \% global transformation of $\bm{z}$.
        
        \algcol{$\bm{v}$ = $\phi$(Linear($\bm{z}$))}
        
        \% fully-connected layer for reshaping.
        
        $\bm{h}$ = $\phi$(Linear($\bm{v}$))  \% dims out: $4 \times 4 \times 256 $.
        
        \% perform a hadamard product here.
        
        \algcol{$\bm{v}_0 = (\bm{A}\matnot{0})^T \cdot \bm{v}$}
        
        \algcol{$\bm{h}$ = $\bm{h} * \bm{v}_0$} \% dims out: $4 \times 4 \times 256 $.
        
        \For{n=1:3}{
        
            \% resnet blocks.
            
            $\bm{h}$ = resblock($\bm{h}$)  
            
            \% dims out: $ (4 \cdot 2 ^ i) \times (4 \cdot 2 ^ i) \times 256 $.
            
            \% reshape $\bm{v}$ for hadamard product.
            
            \algcol{$\bm{v}_n = (\bm{A}\matnot{n})^T \cdot \bm{v}$}
            
            \algcol{$\bm{h}$ = $\bm{h} * \bm{v}_n$}
        }
        $\bm{x}$ = tanh(Conv($\bm{h}$)) \% dims out: $ 32 \times 32 \times 3 $.
        \caption{Modified SNGAN-poly.}
	\label{alg:polygan_sngan_generator_injection}
	\end{small}
\end{algorithm}
}
\end{tabular}
\caption{The algorithm on the left describes the SNGAN generator. The algorithm on the right preserves the resnet blocks of the SNGAN generator, but converts it into a polynomial (named SNGAN-poly). The different lines are emphasized with blue color.}
\end{table}

We consider each residual block as one order of approximation and compute the Hadamard product after each block (see algorithm~\ref{alg:polygan_sngan_generator_injection}).

\subsection{Ablation study on CIFAR10 with non-linear generators}
\label{ssec:polygan_experiments_ablation_suppl}

We conduct an ablation study based on SNGAN architecture (or our variant of SNGAN-poly), since most recent methods are based on similar generators~\cite{zhang2018self, brock2019large}. Unless explicitly mentioned otherwise, the SNGAN is trained on CIFAR10 for unsupervised image generation.

\textbf{Global transformation validation}: We add a global transformation on $\bm{z}$, i.e. a fully-connected layer and use the transformed noise as input to the generator. In the first experiment, we evaluate whether to add a non-linear activation to the global transformation. The two alternatives are: i) with linear global transformation (`Ours-linear-global'), i.e. no non-linearity, and ii) with global transformation followed by a RELU non-linearity (`Ours-RELU-global').

\begin{table}[h]
\centering
\caption{Global transformation validation on SNGAN. The first two results assess the addition of a non-linear activation function after the global transformation. The last two rows compare the addition of a global transformation on the original generator.} 
 \begin{tabular}{|c | c | c|} 
     \hline
     \multicolumn{3}{|c|}{SNGAN on CIFAR10}\\ 
     \hline
     Model & IS ($\uparrow$) & FID ($\downarrow$)\\
     \hline
     Ours-linear-global & $8.23\pm0.10$ & $18.85\pm 0.59$\\
     \hline
     Ours-RELU-global & $\bm{8.30\pm 0.09}$ & $\bm{17.65\pm 0.76}$\\
     \specialrule{.14em}{.07em}{.07em}
     Original & $8.06\pm 0.10$ & $19.06\pm 0.50$\\
     \hline
     Original-RELU-global & $7.98\pm0.27$ & $37.61\pm 7.16$\\
     \hline
 \end{tabular}
 \label{tab:polygan_ablation_sngan_global_trans}
\end{table}

The first two results in Table~\ref{tab:polygan_ablation_sngan_global_trans} demonstrate that both metrics marginally improve when using a non-linear activation function. We add this global transformation with RELU on the original SNGAN. The results are reported in the last two rows of Table~\ref{tab:polygan_ablation_sngan_global_trans} (where the original is mentioned as `Orig', while the alternative of adding a global transformation as `Original-RELU-global').

\textbf{Split $\bm{z}$ into chunks}: The recent BigGAN of \citep{brock2019large} performs hierarchical synthesis of images by splitting the latent vector $\bm{z}$ into one chunk per resolution (block). Each chunk is then concatenated into the respective resolution. 

We scrutinize this splitting against our method; we split the noise $\bm{z}$ into $(k+1)$ non-overlapping chunks of equal size for performing $k$ injections. The injection with splitting is mentioned as `Inject-split' below. Our splitting deteriorates the scores on the task as reported in Table~\ref{tab:polygan_ablation_spliting_z}. It is possible that more elaborate splitting techniques, such as those in \citet{brock2019large} are beneficial.

\begin{table}[h]
\centering
\caption{Ablation experiment on splitting the noise $\bm{z}$ into non-overlapping chunks for the injection.} 
 \begin{tabular}{|c | c | c|} 
     \hline
     \multicolumn{3}{|c|}{SNGAN on CIFAR10}\\
     \hline
     Model & IS ($\uparrow$) & FID ($\downarrow$)\\
     \hline
     Original & $8.06\pm 0.10$ & $19.06\pm 0.50$\\
     \hline
     Inject-split & $7.75\pm0.12$ & $22.08\pm 0.98$\\
     \hline
     Ours-RELU-global & $\bm{8.30\pm 0.09}$ & $\bm{17.65\pm 0.76}$\\
     \hline
 \end{tabular}
 \label{tab:polygan_ablation_spliting_z}
\end{table}

\textbf{Normalization before Hadamard product}: In \citet{karras2018style} they normalize the transformed noise through ADAIN, while in \citet{karras2017progressive} they similarly perform a feature vector normalization. 

We scrutinize a feature normalization on the baseline of `Ours-RELU-global'. For each layer $i$ we divide the $\bm{A}\matnot{i} \bm{z}$ vector with its standard deviation. The variant with global transformation followed by RELU and normalization before the Hadamard product is called `Ours-norm'. The results in Table~\ref{tab:polygan_ablation_normalization_before_hadamard} illustrate that normalization improves the metrics.

\begin{table}[h]
\centering
\caption{Ablation experiment on normalizing the $\bm{A}\matnot{i} \bm{z}$ vector before the Hadamard product.} 
 \begin{tabular}{|c | c | c|} 
     \hline
     \multicolumn{3}{|c|}{SNGAN on CIFAR10}\\
     \hline
     Model & IS ($\uparrow$) & FID ($\downarrow$)\\
     \hline
     Ours-RELU-global & $8.30\pm 0.09$ & $17.65\pm 0.76$\\
     \hline
     Ours-norm & $\bm{8.37\pm 0.11}$ & $\bm{17.14\pm 0.58}$\\
     \hline
 \end{tabular}
 \label{tab:polygan_ablation_normalization_before_hadamard}
\end{table}

\textbf{Skip the Hadamard product}: Motivated by the skip connection of our \modelone{}, we add a skip connection to each Hadamard product. For instance, we modify the term $\Big( (\bm{A}\matnot{1})^T \bm{z} \Big) * \Big( (\bm{B}\matnot{1})^T \bm{b}\matnot{1} \Big)$ into $\Big( (\bm{A}\matnot{1})^T \bm{z} \Big) * \Big( (\bm{B}\matnot{1})^T \bm{b}\matnot{1} \Big) + (\bm{B}\matnot{1})^T \bm{b}\matnot{1}$.

In Table~\ref{tab:polygan_ablation_skip_hadamard}, we use `Ours-RELU-global' as baseline against the model with the skip connection (`Ours-skip').

\begin{table}[h]
\centering
\caption{Ablation experiment on adding a skip connection to each Hadamard product.} 
 \begin{tabular}{|c | c | c|} 
     \hline
     \multicolumn{3}{|c|}{SNGAN on CIFAR10}\\
     \hline
     Model & IS ($\uparrow$) & FID ($\downarrow$)\\
     \hline
     Ours-RELU-global & $8.30\pm 0.09$ & $\bm{17.65\pm 0.76}$\\
     \hline
     Ours-skip & $\bm{8.43\pm 0.11}$ & ${21.54\pm 1.59}$\\
     \hline
 \end{tabular}
 \label{tab:polygan_ablation_skip_hadamard}
\end{table}

Since we use SNGAN both for unsupervised/conditional image generation, we verify the aforementioned results in the \emph{conditional setting}, i.e. when the class information is also provided to the generator and the discriminator.

\textbf{Normalization before Hadamard product}: Similarly to the experiment above, for each layer $i$ we divide the $\bm{A}\matnot{i} \bm{z}$ vector with its standard deviation. The quantitative results in Table~\ref{tab:polygan_ablation_normalization_before_hadamard_conditional} improve the IS score, but the FID deteriorates.

\begin{table}[h]
\centering
\caption{Ablation experiment (conditional GAN setting) on normalizing the $\bm{A}\matnot{i} \bm{z}$ vector before the Hadamard product.} 
 \begin{tabular}{|c | c | c|} 
     \hline
     \multicolumn{3}{|c|}{conditional SNGAN on CIFAR10}\\
     \hline
     Model & IS ($\uparrow$) & FID ($\downarrow$)\\
     \hline
     Ours-RELU-global & ${8.66\pm 0.14}$ & $\bm{13.52\pm0.60}$\\
     \hline
     Ours-norm & $\bm{8.76\pm 0.11}$ & ${15.40\pm 1.29}$\\
     \hline
 \end{tabular}
 \label{tab:polygan_ablation_normalization_before_hadamard_conditional}
\end{table}

\textbf{Skip the Hadamard product}: Similarly to the aforementioned unsupervised case, we assess the performance if we add a skip connection in the Hadamard. In Table~\ref{tab:polygan_ablation_skip_hadamard_conditional}, the quantitative results comparing the baseline and the skip case are presented.

\begin{table}[h]
\centering
\caption{Ablation experiment (conditional GAN setting) on adding a skip connection to each Hadamard product.} 
 \begin{tabular}{|c | c | c|} 
     \hline
     \multicolumn{3}{|c|}{conditional SNGAN on CIFAR10}\\
     \hline
     Model & IS ($\uparrow$) & FID ($\downarrow$)\\
     \hline
     Ours-RELU-global & ${8.66\pm 0.14}$ & $\bm{13.52\pm0.60}$\\
     \hline
     Ours-skip & $\bm{8.77\pm 0.10}$ & ${13.62\pm 0.69}$\\
     \hline
 \end{tabular}
 \label{tab:polygan_ablation_skip_hadamard_conditional}
\end{table}

\subsection{Unsupervised image generation}
\label{ssec:polygan_experiments_unsupervised}
In this experiment, we study the image generation problem without any labels or class information for the images. The architectures of DCGAN and resnet-based SNGAN are used for image generation in CIFAR10~\citep{krizhevsky2014cifar}. Table \ref{tab:polygan_table_unsupervised_quantitative} summarizes the results of the IS/FID scores of the compared methods. In all of the experiments, PolyGAN outperforms the compared methods.

\begin{table}[t]
\centering
\renewcommand{\arraystretch}{1.3}
 \caption{
IS/FID scores on CIFAR10~\citep{krizhevsky2014cifar} utilizing DCGAN~\citep{radford2015unsupervised} and SNGAN~\citep{miyato2018spectral} architectures for unsupervised image generation. Each network is run for $10$ times and the mean and standard deviation are reported. In both cases, inserting block-wise noise injections to the generator (i.e., converting to our proposed PolyGAN) results in an improved score. Higher IS / lower FID score indicate better performance. \\}
 \begin{minipage}{.5\linewidth}
 \begin{tabular}{|c | c | c|} 
 \hline
 \multicolumn{3}{|c|}{DCGAN}\\ 
 \hline
 Model & IS ($\uparrow$) & FID ($\downarrow$)\\
 \hline
 Orig & $6.25\pm 0.06$ & $47.29\pm 2.06$\\
 \hline
 Concat & $6.03\pm 0.06$ & $49.35\pm 2.17$\\
 \hline
 \modelname & $\bm{6.61\pm 0.05}$ & $\bm{42.86\pm 1.02}$\\
 \hline
 \end{tabular}
 \end{minipage}
  \begin{minipage}{.5\linewidth}
 \begin{tabular}{|c | c | c|} 
 \hline
 \multicolumn{3}{|c|}{SNGAN}\\ 
 \hline
 Model & IS ($\uparrow$) & FID ($\downarrow$)\\
 \hline
 Orig & $8.06\pm 0.10$ & $19.06\pm 0.50$\\
 \hline
 Concat & $8.28\pm 0.16$ & $20.77\pm 2.91$\\
 \hline
 \modelname & $\bm{8.30\pm 0.09}$ & $\bm{17.65\pm 0.76}$\\
  \hline
 \end{tabular}
 \end{minipage}
\label{tab:polygan_table_unsupervised_quantitative}
 \end{table}

\subsection{Conditional image generation}
\label{ssec:polygan_experiments_conditional}
Frequently class information is available. We can utilize the labels, e.g. use conditional batch normalization or class embeddings, to synthesize images conditioned on a class. We train two networks, i.e., SNGAN~\citep{miyato2018spectral} in CIFAR10~\citep{krizhevsky2014cifar} and SAGAN~\citep{zhang2018self} in Imagenet~\citep{russakovsky2015imagenet}. SAGAN uses self-attention blocks~\citep{wang2018non} to improve the resnet-based generator.  

Despite our best efforts to show that our method is both architecture and database agnostic, the recent methods are run for hundreds of thousands or even million iterations till ``convergence''. In SAGAN the authors report that for each training multiple GPUs need to be utilized for weeks to reach the final reported Inception Score.  We report the metrics for networks that are run with batch size $64$ (i.e., four times less than the original $256$) to fit in a single 16GB NVIDIA V100 GPU. Following the current practice in ML, due to the lack of computational budget \citep{hoogeboom2019emerging}, we run SAGAN for $400,000$ iterations (see Figure~3 of the original paper for the IS during training)\footnote{Given the batch size difference, our training corresponds to roughly the $100,000$ steps of the authors' reported results.}. Each such experiment takes roughly $6$ days to train. The FID/IS scores of our approach compared against the baseline method can be found in Table~\ref{tab:polygan_table_conditional_quantitative}. In both cases, our proposed method yields a higher Inception Score and a lower FID. 

\begin{table}[t]
\centering
\renewcommand{\arraystretch}{1.3}
 \caption{ Quantitative results on conditional image generation. We implement both SNGAN trained on CIFAR10 and SAGAN trained on Imagenet (for $400,000$ iterations). Each network is run for 10 times and the mean and variance are reported. \\}
 \begin{minipage}{.5\linewidth}
 \begin{tabular}{|c | c | c|} 
 \hline
 \multicolumn{3}{|c|}{SNGAN (CIFAR10)}\\
 \hline
 Model & IS ($\uparrow$) & FID ($\downarrow$)\\
 \hline
 Orig & $8.30\pm0.11$ & $14.70\pm 0.97$\\
 \hline
 \modelname & $\bm{8.66\pm 0.14}$ & $\bm{13.52\pm0.60}$\\
 \hline
 \end{tabular}
 \end{minipage}
  \begin{minipage}{.5\linewidth}
 \begin{tabular}{|c | c | c|} 
 \hline
 \multicolumn{3}{|c|}{SAGAN (Imagenet)}\\ 
 \hline
 Model & IS ($\uparrow$) & FID ($\downarrow$)\\
 \hline
 Orig & $13.81 \pm 0.21$ & $138.20 \pm 8.71$\\
 \hline
 \modelname & $\bm{14.60 \pm 0.15}$ & $\bm{84.37 \pm 6.37}$\\
 \hline
 \end{tabular}
 \end{minipage}
\label{tab:polygan_table_conditional_quantitative}
 \end{table}
 
 \begin{table}[t]
 \renewcommand{\arraystretch}{1.3}
 \caption{Number of parameters for the generators of each approach and on various databases. As can be seen, our method only marginally increases the parameters while substantially improving the performance. On the other hand, ``Concat'' significantly increases the parameters without analogous increase in the performance.\\}
\centering
  \begin{minipage}{.3\linewidth}
 \begin{tabular}{|c | c|} 
 \hline
 \multicolumn{2}{|c|}{DCGAN (CIFAR10)}\\ 
 \hline
 Model & Params\\
 \hline
 Orig & $3,573,440$\\
 \hline
 Concat & $6,416,448$\\
 \hline
 \modelname & $3,663,936$\\
 \hline
 \end{tabular}
 \end{minipage}
 \begin{minipage}{.3\linewidth}
 \begin{tabular}{|c | c|} 
 \hline
 \multicolumn{2}{|c|}{SNGAN (CIFAR10)}\\ 
 \hline
 Model & Params\\
 \hline
 Orig & $4,276,739$\\
 \hline
 Concat & $6,383,875$\\
 \hline
 \modelname & $4,408,835$\\
 \hline
 \end{tabular}
 \end{minipage}
  \begin{minipage}{.3\linewidth}
 \begin{tabular}{|c | c|} 
 \hline
 \multicolumn{2}{|c|}{SAGAN (Imagenet)}\\ 
 \hline
 Model & Params\\
 \hline
 Orig & $42,079,300$\\
 \hline
 \modelname & $42,351,748$\\
 \hline
 \end{tabular}
 \end{minipage}
 \label{tab:polygan_table_params_generators}
\end{table}
 \section{Experimental model comparison}
\label{sec:polygan_suppl_model_comparison}

An experimental comparison of the two models described in Section~\ref{sec:polygan_method} is conducted below. Unless explicitly mentioned otherwise, the networks used below do not include any non-linear activation functions, they are polynomial expansions with linear blocks. We use the following four experiments: 

\textbf{Sinusoidal on 2D}: The data distribution is described by $[x, \sin(x)]$ with $x \in [0, 2\pi]$ (see Section~\ref{sec:polygan_synthetic_2d} for further details). We assume $8^{th}$ order approximation for \modelone{} and $12^{th}$ order for \modeltwo{}. Both have width $15$ units. The comparison between the two models in Figure~\ref{fig:polygan_linear_sin_model_comparison} demonstrates that they can both capture the data manifold. Impressively, the \modelone{} does not synthesize a single point that is outside of the manifold.

\begin{figure}[h]
    \subfloat[GT]{\includegraphics[width=0.36\linewidth]{linear-sin_gt.pdf}\hspace{-5mm}}
    \subfloat[\modelone]{\includegraphics[width=0.36\linewidth]{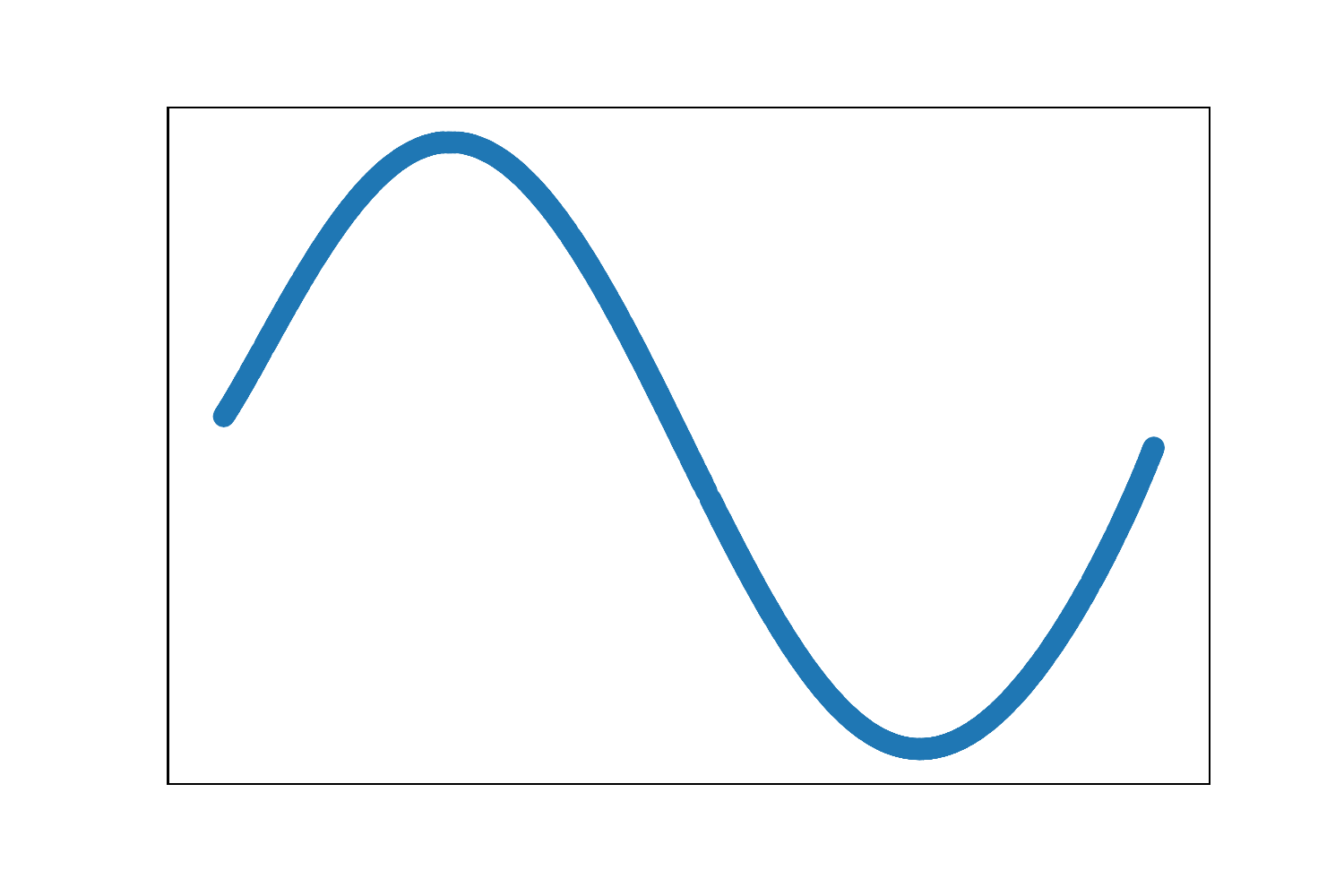}\hspace{-5mm}}
    \subfloat[\modeltwo]{\includegraphics[width=0.36\linewidth]{linear-sin_inject.pdf}\hspace{-4mm}}
\caption{Synthesized data for learning the $[x, sin(x)]$ signal. No activation functions are used in the generators. From left to right: (a) the data distribution, (b) \modelone{}, (c) \modeltwo{}.}
\label{fig:polygan_linear_sin_model_comparison}
\end{figure}

\textbf{Astroid}: The data distribution is described on Section~\ref{sec:polygan_experiments_suppl}. The samples comparing the two models are visualized in Figure~\ref{fig:polygan_linear_sin3d_model_comparison}.

\begin{figure}[h]
    \subfloat[GT]{\includegraphics[width=0.36\linewidth]{linear-astroid.pdf}\hspace{-5mm}}
    \subfloat[\modelone]{\includegraphics[width=0.36\linewidth]{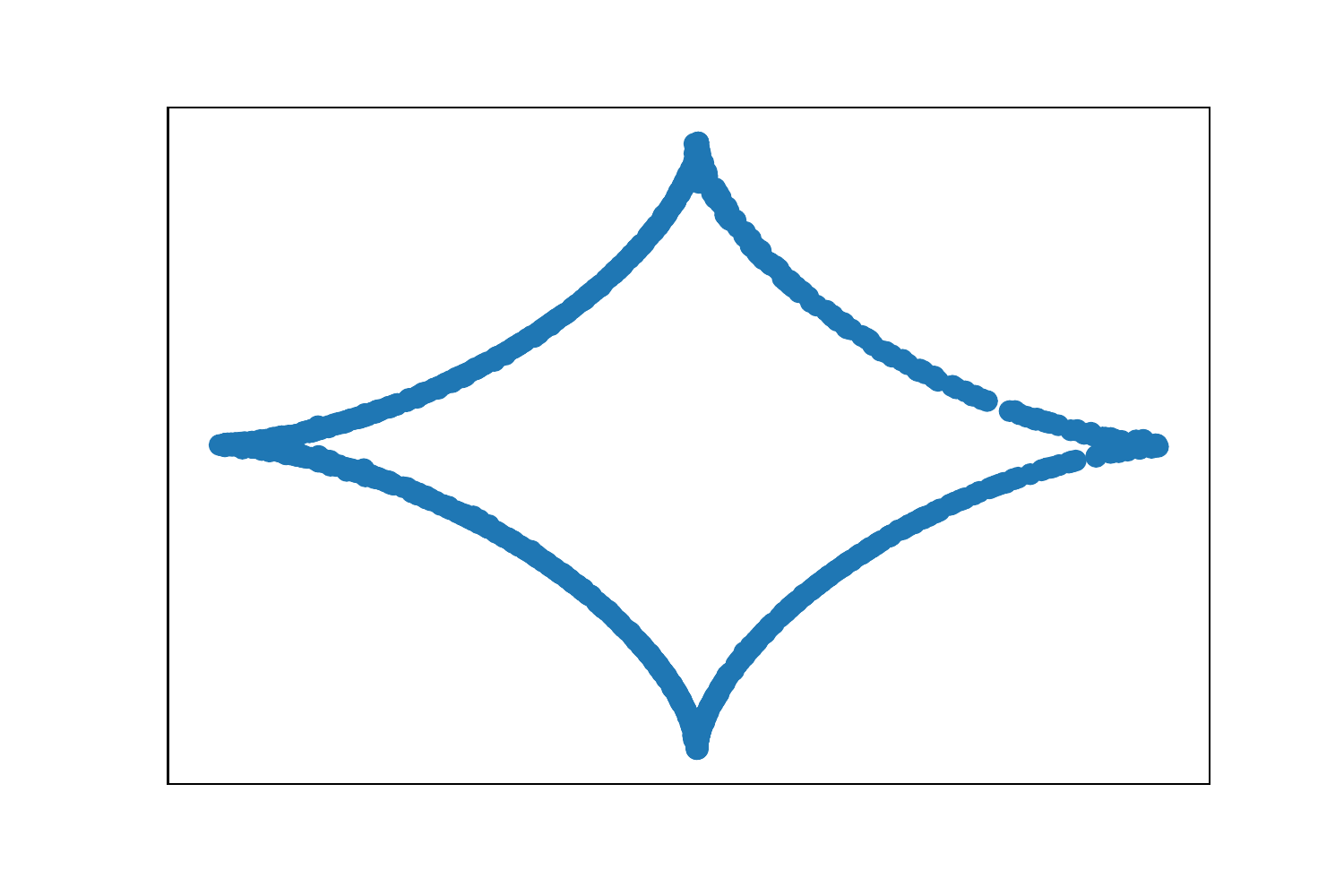}\hspace{-5mm}}
    \subfloat[\modeltwo]{\includegraphics[width=0.36\linewidth]{linear-astroid_inject.pdf}\hspace{-4mm}}
\caption{Synthesized data for learning the Astroid. Both models generate only plausible examples.}
\label{fig:polygan_linear_astroid_model_comparison}
\end{figure}

\textbf{Sin3D}: The data distribution is described on Section~\ref{ssec:polygan_experiments_synthetic_suppl}. In Figure~\ref{fig:polygan_linear_sin3d_model_comparison} the samples from the two models are illustrated.

\begin{figure}[h]
    \subfloat[GT]{\includegraphics[width=0.31\linewidth]{linear_synthetic_3d-sin3d_gt.png}\hspace{+0.6mm}}
    \subfloat[\modelone]{\includegraphics[width=0.31\linewidth]{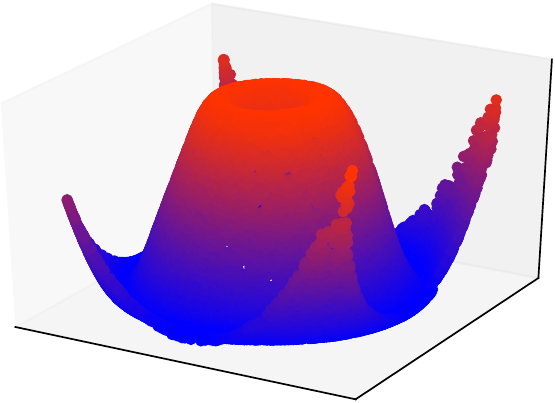}\hspace{+0.6mm}}
    \subfloat[\modeltwo]{\includegraphics[width=0.31\linewidth]{linear_synthetic_3d-sin3d_ours.png}}
\caption{Experiment on 3D synthetic data. From left to right: (a) the data distribution, (b) \modelone, (c) \modeltwo.}
\label{fig:polygan_linear_sin3d_model_comparison}
\end{figure}

\textbf{Swiss roll}: The data distribution is described on Section~\ref{ssec:polygan_experiments_synthetic_suppl}. In Figure~\ref{fig:polygan_linear_swiss_roll_model_comparison} the samples from the two models are illustrated.

\begin{figure}[h]
    \subfloat[GT]{\includegraphics[width=0.31\linewidth]{linear_synthetic_3d-swiss_roll_gt.png}\hspace{+0.6mm}}
    \subfloat[\modelone]{\includegraphics[width=0.31\linewidth]{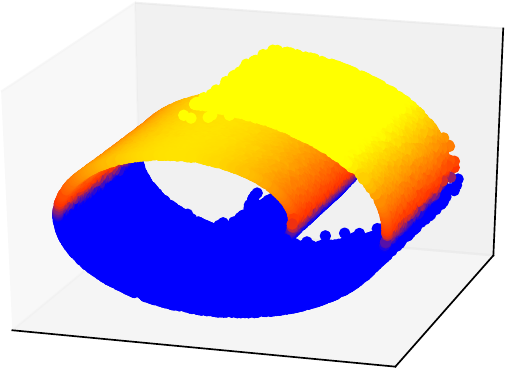}\hspace{+0.6mm}}
    \subfloat[\modeltwo]{\includegraphics[width=0.31\linewidth]{linear_synthetic_3d-swiss_roll_ours.png}\hspace{+0.1mm}}
\caption{Experiment on 3D synthetic data (`swiss roll'). From left to right: (a) the data distribution, (b) \modelone, (c) \modeltwo. Note that \modelone generates some noisy samples in contrast to \modeltwo.}
\label{fig:polygan_linear_swiss_roll_model_comparison}
\end{figure}

\textbf{Digit generation}: We conduct an experiment on images to verify that both architectures can learn higher-dimensional distributions. We select the digit images as described in Section~\ref{sec:polygan_linear_mnist}. In this case, \modelone{} is implemented as follows: each $\bm{U}\matnot{i}$ is a series of linear convolutions with stride $2$ for $i=1, \ldots, 4$, while $\bm{C}$ is a linear residual block. We emphasize that in both models all the activation functions are removed and there is a single $tanh$ in the output of the generator for normalization purposes.

\begin{figure}[h]
    \subfloat[GT]{\includegraphics[width=0.31\linewidth]{linear-mnist_gt.png}\hspace{+0.6mm}}
    \subfloat[\modelone]{\includegraphics[width=0.31\linewidth]{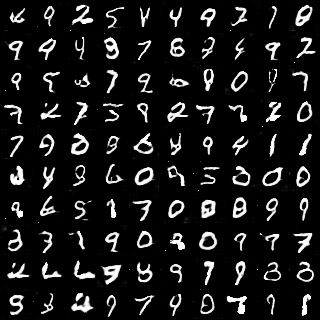}\hspace{+0.6mm}}
    \subfloat[\modeltwo]{\includegraphics[width=0.31\linewidth]{linear-mnist_injection.png}\hspace{+0.1mm}}
\caption{Comparison of the two decompositions on digit generation.}
\label{fig:polygan_linear_mnist_model_comparison}
\end{figure}

\end{document}